\newlength\aftertitskip     \newlength\beforetitskip
\newlength\interauthorskip  \newlength\aftermaketitskip
\def\maketitle{\par
 \begingroup
   \def\thefootnote{\fnsymbol{footnote}}
   \def\@makefnmark{\hbox to 4pt{$^{\@thefnmark}$\hss}}
   \@maketitle \@thanks
 \endgroup
\setcounter{footnote}{0}
 \let\maketitle\relax \let\@maketitle\relax
 \gdef\@thanks{}\gdef\@author{}\gdef\@title{}\let\thanks\relax}
\def\@startauthor{\noindent \normalsize\bf}
\def\@endauthor{}
\def\@starteditor{\noindent \small {\bf Editor:~}}
\def\@endeditor{\normalsize}
\def\@maketitle{\vbox{\hsize\textwidth
 \linewidth\hsize \vskip \beforetitskip
 {\begin{center} \LARGE\@title \par \end{center}} \vskip \aftertitskip
 {\def\and{\unskip\enspace{\rm and}\enspace}%
  \def\addr{\small\it}%
  \def\email{\hfill\small\tt}%
  \def\name{\normalsize\bf}%
  \def\AND{\@endauthor\rm\hss \vskip \interauthorskip \@startauthor}
  \@startauthor \@author \@endauthor}
}}
\newtheorem{theorem}{Theorem}[section]
\theoremstyle{definition}
\numberwithin{equation}{section}
\title{Effective Structural Encodings via Local Curvature Profiles}
\author{\name Lukas Fesser
\email{lukas\_fesser@fas.harvard.edu}\\
  \addr{Harvard University}\\
  \name Melanie Weber \email{mweber@seas.harvard.edu}\\
  \addr{Harvard University}
}
\begin{document}
\maketitle

\begin{abstract}
\noindent Structural and Positional Encodings can significantly improve the performance of Graph Neural Networks in downstream tasks. Recent literature has begun to systematically investigate differences in the structural properties that these approaches encode, as well as performance trade-offs between them. However, the question of which structural properties yield the most effective encoding remains open. In this paper, we investigate this question from a geometric perspective. We propose a novel structural encoding based on discrete Ricci curvature (\emph{Local Curvature Profiles}, short \emph{LCP}) and show that it significantly outperforms existing encoding approaches. We further show that combining local structural encodings, such as LCP, with global positional encodings improves downstream performance, suggesting that they capture complementary geometric information. Finally, we compare different encoding types with (curvature-based) rewiring techniques. Rewiring has recently received a surge of interest due to its ability to improve the performance of Graph Neural Networks by mitigating over-smoothing and over-squashing effects. Our results suggest that utilizing curvature information for structural encodings delivers significantly larger performance increases than rewiring. \footnote{Code available at \url{https://github.com/Weber-GeoML/Local_Curvature_Profile}}
\end{abstract}

\section{Introduction}
Graph Machine Learning has emerged as a powerful tool in the social and natural sciences, as well as in engineering~\citep{zitnik,gligorijevic2021structure,shlomi2020graph,wu2022graph}.  Graph Neural Networks (GNN), which implement the message-passing paradigm, have emerged as the dominating architecture.  However,  recent literature has revealed several shortcoming in their representational power, stemming from their inability to distinguish certain classes of non-isomorphic graphs~\citep{xu2018powerful,morris2019weisfeiler} and difficulties in accurately encoding long-range dependencies in the underlying graph~\citep{alon2021on,li2018deeper}.  A potential remedy comes in the form of 
\emph{Structural} (short: \emph{SE}) and \emph{Positional Encodings} (short: \emph{PE}), which endow nodes or edges with additional information. This can be in the form of local information, such as the structure of the subgraph to which a node belongs or its position within it.  Other encoding types capture global information, such as the node's position in the graph or the types of substructures contained in the graph.  Strong empirical evidence across different domains and architectures suggests that encoding such additional information improves the GNN's performance~\citep{JMLR:Dwivedi,bouritsas2022improving,park2022grpe}. \\



\noindent Many examples of effective encodings are derived from classical tools in Graph Theory and Combinatorics. This includes spectral encodings~\citep{JMLR:Dwivedi,kreuzer2021rethinking},  which encode structural insights derived from the analysis of the Graph Laplacian and its spectrum. Another popular encoding are (local) substructure counts~\citep{bouritsas2022improving}, which are inspired by the classical idea of \emph{network motifs}~\citep{holland1976local}.  Both of those encodings can result in expensive subroutines, which can limit scalability in practise.  
In this work, we turn to a different classical tool: Discrete Ricci curvature. Notions of discrete Ricci curvature have been previously considered in the Graph Machine Learning literature, including for unsupervised node clustering~\citep{ni2019community,sia2019ollivier,tian2023curvature}, graph rewiring~\citep{topping2022understanding,nguyen2023revisiting} and for utilizing curvature information in Message-Passing GNNs~\citep{Ye2020Curvature,lai_curvformer}.  Here, we propose a 
novel local structural encoding based on discrete Ricci curvature termed \emph{Local Curvature Profiles} (short: \emph{LCP}).  We analyze the effectiveness of LCP through a range of experiments, which reveal LCP's superior performance in node- and graph-level tasks. We further provide a theoretical analysis of LCP's computational efficiency and impact on expressivity. \\

\noindent Despite a plethora of proposed encodings, the question of which structural properties yield the most effective encoding remains open.  In this paper, we investigate this question from a geometric perspective.  Specifically, we hypothesize that different encodings capture complementary information on the local and global properties of the graph. This would imply that combining different encoding types could lead to improvements in downstream performance.  We will investigate this hypothesis experimentally,  focusing on combinations of local structural and global positional encodings, which are expected to capture complementary structural properties (Tab.~\ref{tab:encoding}).  \\

\noindent Graph rewiring can be viewed as a type of relational encoding, in that it increases a node's awareness of information encoded in long-range connections during message-passing.  Several graph rewiring techniques utilize curvature to identify edges to remove and artificial edges to add~\citep{topping2022understanding,nguyen2023revisiting}.  In this context, the question of the most effective use for curvature information (as rewiring or as structural encoding) arises.  We investigate this question through systematic experiments.

\subsection{Summary of contributions}
The main contributions of this paper are as follows:
\begin{enumerate}
\vspace*{-8pt}
\setlength{\itemsep}{2pt}
    \item We introduce \emph{Local Curvature Profiles} (short \emph{LCP}) as a novel structural encoding (sec.~\ref{sec:LCP}). Our approach encodes for each node a characterization of the geometry of its neighborhood. We show that encoding such information provably improves the expressivity of the GNN (sec.~\ref{sec:theory}) and enhances its performance in downstream tasks (sec.~\ref{sec:exp-Q1}).
    \item We further show that combining local structural encodings, such as LCP, with global positional encodings improves performance in node and graph classification tasks (sec.~\ref{sec:exp-Q2}). Our results suggest that local structural and global positional encoding capture complementary information.
    \item We further compare LCP with curvature-based rewiring, a previous approach for encoding curvature characterizations into Graph Machine Learning frameworks. Our results suggest that encoding curvature via LCP leads to superior performance in downstream tasks (sec.~\ref{sec:exp-Q3}).
\end{enumerate}
We perform a range of ablation studies to investigate various design choices in our framework, including the choice of the curvature notion (sec.~\ref{sec:exp-curv}).

\subsection{Related Work}
A plethora of structural and positional encodings have been proposed in the GNN literature; notable examples include encodings of spectral information~\citep{dwivedi2022graph,JMLR:Dwivedi}; node distances based on shortest paths, random walks and diffusion processes~\citep{li2020distance,mialon2021graphit}; local substructure counts~\citep{bouritsas2022improving,zhao2022from} and node degree distributions~\citep{cai2018simple}. A recent taxonomy and benchmark on encodings in Graph Transformers can be found in~\citep{rampasek2022recipe}. 
Notions of discrete Ricci curvature have been utilized previously in Graph Machine Learning, including for Graph Rewiring~\citep{topping2022understanding,nguyen2023revisiting,fesser2023mitigating} or directly encoded into message-passing mechanisms~\citep{Ye2020Curvature} or attention weights~\citep{lai_curvformer}. Beyond GNNs, discrete curvature has been applied in community detection~\citep{ni2019community,sia2019ollivier,tian2023curvature} and graph subsampling~\citep{weber2017characterizing}, among others.

\section{Background and Notation}
Following standard convention, we denote GNN input graphs as $G=(X,E)$ 
with node attributes $X \in \mathbb{R}^{\vert V \vert \times m}$ and edges $E \subseteq V \times V$.

\subsection{Graph Neural Networks}
\textbf{Message-Passing Graph Neural Networks.}
The blueprint of many state of the art Graph Machine Learning architectures are Message-Passing Graph Neural Networks (MPGNNs)~\citep{gori2005new,Hamilton:2017tp}. They learn node embeddings via an iterative scheme, where each node's representation is iteratively updated based on its neighbors' representations.  Formally,  let $\mathbf{x}_v^l$ denote the representation of node $v$ at layer $l$,  then the representation after one iteration of message-passing, is given by
\begin{equation*}
\mathbf{x}_v^{l+1} = \phi_l \left( \bigoplus_{p \in \mathcal{N}_v \cup \{v\}} \psi_l \left ( \mathbf{x}_p^l\right)\right)    
\end{equation*}
The number of such iterations is often called the \emph{depth} of the GNN.  The initial representations $\mathbf{x}_v^0$ are usually given by the node attributes in the input graph. The specific form of the functions $\phi_k, \psi_k$ varies across architectures. In this work, we focus on three of the most popular instances of MPGNNs: \emph{Graph Convolutional Networks} (short: \emph{GCN})~\citep{Kipf:2017tc}, \emph{Graph Isomorphism Networks} (short: \emph{GIN})~\citep{xu2018powerful} and \emph{Graph Attention Networks} (short: \emph{GAT})~\citep{Velickovic:2018we}. \\

\noindent \textbf{Representational Power.}
One way of understanding the utility of a GNN is by analyzing its representational power or \emph{expressivity}. A classical tool is isomorphism testing, i.e., asking whether two non-isomorphic graphs can be distinguished by the GNN.  A useful heuristic for this analysis is the Weisfeiler-Lehman (WL) test~\citep{WL}, which iteratively aggregates labels from each node's neighbors into multi-sets. The multi-sets stabilize after a few iterations; the WL test then certifies two graphs as non-isomorphic only if the final multi-sets of their nodes are not identical. Unfortunately, the test is only a heuristic and may fail for certain classes of graphs; notable examples include regular graphs. A generalization of the test (\emph{$k$-WL test}~\citep{k-WL}) assigns multi-sets of labels to $k$-tuples of nodes. It can be shown that the $k$-WL test is strictly more powerful than the $(k-1)$-WL test in that it can distinguish a wider range of non-isomorphic graphs. 
Recent literature has demonstrated that the representational power of MPGNNs is limited to the expressivity of the 1-WL test, which stems from an inability to  trace back the origin of specific messages~\citep{xu2018powerful,morris2019weisfeiler}.  However, it has been shown that certain changes in the GNN architecture, such as the incorporation of high-order information~\citep{morris2019weisfeiler,maron2019provably} or the encoding of local or global structural features~\citep{bouritsas2022improving,feng2022powerful} can improve the representational power of the resulting GNNs beyond that of classical MPGNNs. However, often the increase in expressivity comes at the cost of a significant increase in computational complexity.   \\

\noindent \textbf{Structural and Positional Encodings.}
Structural (SE) and Positional (PE) encodings endow MPGNNs with structural information that it cannot learn on its own, but which is crucial for downstream performance.  Encoding approaches can capture either local or global information.  Local PE endow nodes with information on its position within a local cluster or substructure, whereas global PE provide information on the nodes' global position in the graph.  As such, PE are usually derived from distance notions.  Examples of local PE include the distance of a node to the centroid of a cluster or community it is part of. Global PE often relate to spectral properties of the graph, such as the eigenvectors of the Graph Laplacian~\citep{JMLR:Dwivedi} or random-walk based node similarities~\citep{dwivedi2022graph}. Global PE are generally observed to be more effective than local PE.  In contrast, SE encode structural similarity measures,  either by endowing nodes with information about the subgraph they are part of or about the global topology of the graph.  Notable examples of local SE are substructure counts~\citep{bouritsas2022improving,zhao2022from}, which are among the most popular encodings.  Global SE often encode graph characteristics or summary statistics that an MPGNN is not able to learn on its own, e.g., its diameter, girth or the number of connected components~\citep{loukas2019graph}. In this work, we focus on Global PE and Local SE, as well as combinations of both types of encodings. 

\begin{table}[ht]
\centering
\begin{footnotesize}
\renewcommand{\arraystretch}{2}
\begin{tabular}{lcccc}
\hline
Approach & Type & Encoding & Complexity & \makecell{Geometric \\ Information} \\
\hline
LA~\citep{JMLR:Dwivedi} &  Global PE & \makecell{Eigenvectors of \\ Graph Laplacian.}  & $O(\vert V \vert^3)$& spectral \\
RW~\citep{dwivedi2022graph} & Global PE & \makecell{Landing probability of \\ $k$-random walk.}  & $O(\vert V \vert d_{\max}^k)$  & \makecell{$k$-hop \\ commute time} \\
SUB~\citep{zhao2022from} & Local SE & \makecell{$k$-substructure counts.}  &  $O(\vert V \vert^k)^\ast$ & motifs (size $k$) \\
LDP~\citep{cai2018simple} & Local SE & \makecell{Node degree distribution \\over neighborhood.}  & $O(\vert V \vert)$ & node degrees \\
\textbf{LCP (this paper)} & \textbf{Local SE} & \textbf{\makecell{Curvature distribution \\ over neighborhood.}} & $\bm{O(\vert E \vert d_{\max}^3)^\ast}$ & \textbf{\makecell{motifs, 2-hop \\ commute time}} \\
\hline
\end{tabular}
\caption{Overview encoding approaches. ($^\ast$: variants with lower complexity available)}
\end{footnotesize}
\label{tab:encoding}
\end{table}
%

\noindent \textbf{Over-smoothing and Over-squashing.}
MPGNNs may also suffer from over-squashing and over-smoothing effects.  Over-squashing, first described by~\citet{alon2021on} characterizes difficulties in leveraging information encoded in long-range connections, which is often crucial for downstream performance.  Over-smoothing~\citep{li2018deeper} describes challenges in distinguishing node representations of nearby nodes, which occurs in deeper GNNs. While over-squashing is known to particular affect graph-level tasks,  difficulties related to over-smoothing arise in particular in node-level tasks. Among the architectures considered here, GCN and GIN are prone to both effects, as they implement sparse message-passing. In contrast, GAT improves the encoding of long-range dependencies using attention scores, which alleviates over-smoothing and over-squashing. A common approach for mitigating both effects in GCN, GIN and GAT is graph rewiring~\citep{karhadkar2023fosr,topping2022understanding,nguyen2023revisiting,fesser2023mitigating}.

\subsection{Discrete Curvature}
\begin{wrapfigure}{R}{0.4\linewidth}
\centering
\includegraphics[scale=0.15]{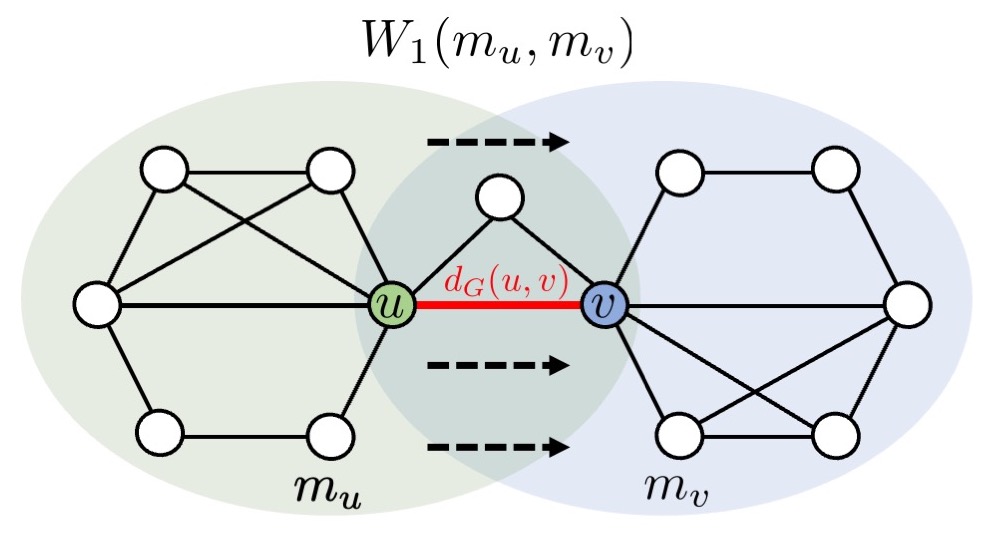}
    \caption{Computing ORC.}
    \label{fig:orc}
    \vspace{-10pt}
\end{wrapfigure}

Throughout this paper we utilize discrete Ricci cuvature,  a central tool from Discrete Geometry that allows for  characterizing (local) geometric properties of graphs.  Discrete Ricci curvatures are defined via curvature analogies, mimicking classical properties of continuous Ricci curvature in the discrete setting.  A number of such notions have been proposed in the literature; in this work, we mainly utilize a notion by Ollivier~\citep{Ol2}, which we introduce below.  Others include Forman's Ricci curvature, which we introduce in appendix~\ref{supp:FRC}.\\

\noindent \textbf{Ollivier's Ricci Curvature.}
Ollivier's notion of Ricci curvature derives from a connection of Ricci curvature and optimal transport.  Suppose we endow the 1-hop neighborhoods of two neighboring vertices $u,v$, with uniform measures $m_{i}(z) := \frac{1}{{\rm deg}(i)}$, where $z$ is a neighbor of $i$ and   $i \in \{u,v\}$. 
The transportation cost between the two neighborhoods along the edge $e=(u,v)$ can be measured by the Wasserstein-1 distance between the measures, i.e.,
\begin{equation}
    W_1(m_{u}, m_{v}) = \inf_{m \in \Gamma(m_{u},m_{v})} \int_{(z,z') \in V \times V} d(z,z') m(z,z') \; dz \; dz' \; 
\end{equation}
where $\Gamma(m_{u},m_{v})$ denotes the set of all measures over $V \times V$ with marginals $m_{u},m_{v}$. 
\emph{Ollivier's Ricci curvature}~\citep{Ol2} (short: ORC) is defined as
\begin{equation}\label{eq:orc-e}
	\kappa (u,v) := 1 - \frac{W_1 (m_{u}, m_{v})}{d_G(u,v)} \; .
\end{equation}
The computation of ORC is shown schematically in Fig.~\ref{fig:orc}.\\

\noindent \textbf{Graph Rewiring.} Previous applications of discrete Ricci curvature to GNNs include \emph{rewiring}, i.e.,  approaches that add and remove edges to mitigate over-smoothing and over-squashing effects~\citep{karhadkar2023fosr,topping2022understanding,nguyen2023revisiting,fesser2023mitigating}.  It has been observed that long-range connections, which cause over-squashing effects, have low (negative) ORC~\citep{topping2022understanding}, whereas oversmoothing is caused by edges of positive curvature in densely connected regions of the graph~\citep{nguyen2023revisiting}.\\

\noindent \textbf{Computational Aspects.} The computation of ORC can be expensive, as it requires solving an optimal transport problem for each edge in the graph, each of which is $O(d_{\max}^3)$ (via the Hungarian algorithm).  Faster approaches via Sinkhorn distances or combinatorial approximations of ORC exist~\citep{tian2023curvature}, but can be much less accurate.  In contrast,  variants of Forman's curvature can be computed in as little as $O(d_{\max})$ per edge (see appendix~\ref{supp:FRC}).

\section{Encoding geometric structure with Structural Encodings}

\subsection{Local Curvature Profile}
\label{sec:LCP}
\noindent Curvature-based rewiring methods generally only affect the most extreme regions of the graph. They compute the curvature of every edge in the graph, but only add edges to the neighborhoods of the most negatively curved edges to resolve bottlenecks (\cite{topping2022understanding}). Similarly, they only remove the most positively curved edges to reduce over-smoothing (\cite{nguyen2023revisiting}, \cite{fesser2023mitigating}). Regions of the graph that have no particularly positively or negatively curved edges are not affected by the rewiring process, even though the curvature of the edges in these neighborhoods has also been computed. We believe that the curvature information of edges away from the extremes of the curvature distribution, which is not being used by curvature-based rewiring methods, can be beneficial to GNN performance. \\

\noindent We therefore propose to use curvature - more specifically the ORC - as a structural node encoding. Our \emph{Local Curvature Profile} (LCP) adds summary statistics of the local curvature distribution to each node's features. Formally, for a given graph $G = (V, E)$ with vertex set $V$ and edge set $E$, we denote the multiset of the curvature of all incident edges of a node $v \in V$ by CMS (curvature multiset), i.e. $\text{CMS}(v) := \{\kappa(u, v) : (u, v) \in E \}$. We now define the Local Curvature Profile to consist of the following five summary statistics of the CMS:
\begin{equation*}
    \text{LCP}(v) := \left[\min(\text{CMS}(v)), \max(\text{CMS}(v)), \text{mean}(\text{CMS}(v)), \text{std}(\text{CMS}(v)), \text{median}(\text{CMS}(v))\right]
\end{equation*}

\noindent While all our experiments in the main text are based on this notion of the LCP, other quantities from the CMS could reasonably be included. We provide results based on alternative notions of the LCP in appendix~\ref{sub:lcp_alt}. Note that computing the LCP as a preprocessing step requires us to calculate the curvature of each edge in $G$ exactly once. Our approach therefore has the same computational complexity as curvature-based rewiring methods, but uses curvature information everywhere in the graph.

\subsection{Theoretical results}
\begin{wrapfigure}{R}{0.5\linewidth}
\centering
\includegraphics[scale=0.22]{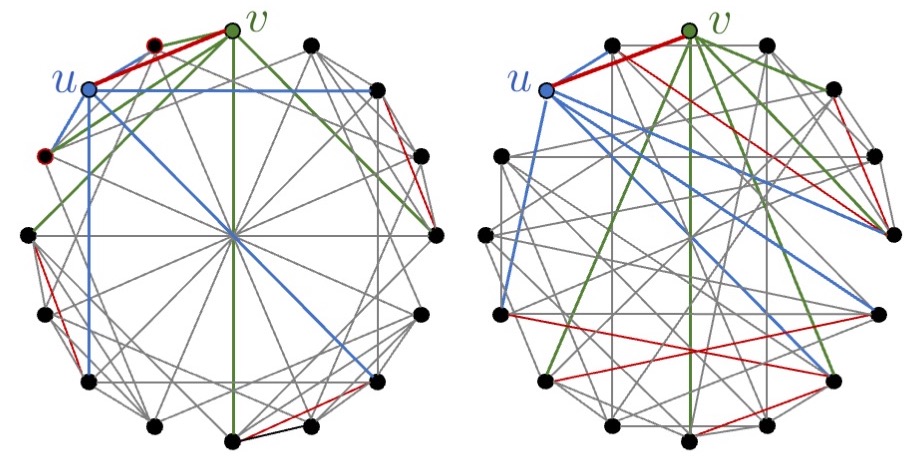}
    \caption{Illustration of optimal transport plans for computing the ORC of $(u,v)$ in the 4x4 Rooke (left) and Shrikhande (right) graphs. Edges along which a mass of $\frac{1}{deg(u)}=\frac{1}{deg(v)}=\frac{1}{6}$ is moved are marked in red. We see that $\kappa(u,v)=\frac{1}{3}$ in the Rooke and $\kappa(u,v)=0$ in the Shrikhande graph.}
    \label{fig:expressivity}
    \vspace{-10pt}
\end{wrapfigure}
\label{sec:theory}
We will see below that encoding geometric characterization of substructures via LCP leads to an empirical benefit. Can we measure this advantage also with respect to the representational power of the GNN?

\begin{theorem}
MPGNNs with LCP structural encodings are strictly more expressive than the 1-WL test and hence than MPGNNs without encodings.
\end{theorem}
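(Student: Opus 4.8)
The plan is to establish the claim in two parts: (i) MPGNNs with LCP are \emph{at least} as expressive as $1$-WL, and (ii) there exist non-isomorphic graphs that $1$-WL cannot distinguish but MPGNNs with LCP can. For part (i), I would appeal to the standard fact that a sufficiently expressive MPGNN (e.g., GIN) with injective aggregation recovers the $1$-WL coloring when the input features are identical; appending the five LCP summary statistics to each node's initial feature vector can only refine the initial coloring, so the resulting MPGNN still simulates $1$-WL. Formally, one shows by induction on the number of rounds that the MPGNN's node embeddings are at least as fine a partition as the $1$-WL color classes, which gives one direction immediately.

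For part (ii), the idea is to exhibit a pair of $1$-WL-indistinguishable graphs on which LCP already differs at the level of initial node features, so that no message passing is even needed — the readout of the (multiset of) augmented features already separates them. The natural candidates are strongly regular graphs with the same parameters, which are the canonical witnesses that $1$-WL fails: all nodes get the same $1$-WL color forever because the graphs are regular and have the same local degree structure. The figure in the excerpt already does the work here: in the $4\times 4$ Rook's graph one has $\kappa(u,v) = \tfrac{1}{3}$ on every edge, whereas in the Shrikhande graph $\kappa(u,v) = 0$ on every edge (both are $6$-regular on $16$ vertices with the same $1$-WL output). Hence $\text{CMS}(v)$ is the constant multiset $\{\tfrac13,\dots,\tfrac13\}$ in the Rook's graph and $\{0,\dots,0\}$ in the Shrikhande graph, so $\text{LCP}(v)$ differs (e.g., in the mean coordinate) for every node. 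An MPGNN that reads node features and applies an injective graph-level readout therefore outputs different values on the two graphs, while $1$-WL certifies them as possibly isomorphic. Combining (i) and (ii) yields strictness.

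The main obstacle — and the step that needs care rather than cleverness — is part (ii): verifying that the chosen pair of graphs is genuinely $1$-WL-indistinguishable (this is classical for strongly regular graphs with equal parameters, since every vertex receives the identical stable color), and that the ORC values are exactly as claimed on \emph{every} edge, not just the highlighted one. The latter follows from vertex-transitivity of both the Rook's and Shrikhande graphs, which makes all edges equivalent, so a single optimal-transport computation per graph suffices; I would either cite this or include the short transport-plan argument sketched in Figure~\ref{fig:expressivity}. A minor additional point to state explicitly is that the separation is \emph{robust}: since the LCP features differ in a coordinate by a fixed positive amount ($\tfrac13$ vs.\ $0$), any MPGNN whose first-layer map is injective on these feature values (again, e.g., GIN-style) inherits the separation, so the result is not an artifact of a pathological architecture choice. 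Finally, one should note that adding LCP does not \emph{decrease} expressivity — trivial, since the pre-LCP features are a coordinate projection of the post-LCP features — which together with the strict gain on the Rook/Shrikhande pair completes the proof.
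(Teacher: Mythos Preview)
Your proposal is correct and follows essentially the same approach as the paper: first argue that appending LCP features cannot lose $1$-WL expressivity, then exhibit the $4\times 4$ Rook's and Shrikhande graphs as a $1$-WL-indistinguishable pair separated by their ORC values ($\tfrac{1}{3}$ versus $0$). One small refinement: it is \emph{edge}-transitivity (not merely vertex-transitivity) that guarantees every edge in each graph carries the same curvature, though both graphs are in fact edge-transitive, so your conclusion stands.
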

\begin{proof}
Seminal work by~\citep{xu2018powerful,morris2019weisfeiler} has established that standard MPGNNs, such as GIN, are as expressive as the 1-WL test. It can be shown via a simple adaption of~\citep[Thm. 3]{xu2018powerful} that adding LCP encodings does not decrease their expressivity, i.e.,  MPGNNs with LCP are at least as expressive as the 1-WL test.  To establish strictly better expressivity,  it is sufficient to identify a set of non-isomorphic graphs that cannot be distinguished by the 1-WL test, but that differ in their local curvature profiles.  Consider the 4x4 Rooke and Shrikhande graphs,  two non-isomorphic, strongly regular graphs.  While nodes in both graphs have identical node degrees (e.g.,  could not be distinguished with classical MPGNNs), their 2-hop connectivities differ.  This results in differences in the optimal transport plan of the neighborhood measures that is used to compute ORC (see Fig.~\ref{fig:expressivity}).  As a result, the (local) ORC distribution across the two graphs varies, resulting in different local curvature profiles.\\
\end{proof}

\noindent \textbf{Remark 1.} It has been shown that encoding certain local substructure counts can allow MPGNNs to distinguish the 4x4 Rooke and Shrikhande graphs, establishing that such encodings lead to strictly more expressive MPGNNs~\citep{bouritsas2022improving}. However, this relies on a careful selection of the type of substructures to encode, a design choice that is expensive to fine-tune in practise, due to the combinatorial nature of the underlying optimization problem. In contrast,  LCP implicitly characterizes substructure information via their impact on the local curvature distribution. This does not require fine-tuning of design choices. We note that spectral encodings cannot distinguish between the 4x4 Rooke and Shrikhande graphs, as their spectra are identical. LDP encodings cannot distinguish them either, as they have identical node degree distributions. Positional encodings based on random walks can distinguish them, but only stochastically.\\

\noindent We note that the relationship between the ORC curvature distribution and the WL hierarchy has been recently discussed in~\citep{southern}. They show that the 4x4 Rooke and Shrikhande graphs cannot be distinguished by the 2-WL or 3-WL test, indicating that ORC, and hence LCP encodings, can distinguish some graphs that cannot be distinguished with higher-order WL tests. However, the relationship between ORC and the WL hierachy remains an open question.

\section{Experiments}
In this section, we experimentally demonstrate the effectiveness of  the LCP as a structural encoding on a variety of tasks, including node and graph classification. In particular, we aim to answer the following questions:
\begin{itemize}
    \item[Q1] Does LCP enhance the performance of GNNs on node- and graph-level tasks?
    \item[Q2] Are structural and positional encodings complementary? Do they encode different types of geometric information that can jointly enhance the performance of GNNs?
    \item[Q3] How does LCP compare to (curvature-based) rewiring in terms of downstream performance?
\end{itemize}

\noindent We complement our main experiments with an investigation of other choices of curvature notions. In general, our experiments are meant to ensure a high level of  fairness and comprehensiveness. Obtaining the best possible performance for each dataset presented is not our primary goal here.

\subsection{Experimental setup}

\noindent We treat the computation of encodings or rewiring as a preprocessing step, which is first applied to all graphs in the data sets considered. We then train a GNN on a part of the rewired graphs and evaluate its performance on a withheld set of test graphs. As GNN architectures, we consider GCN~\citep{Kipf:2017tc}, GIN~\citep{xu2018powerful}, and GAT(\cite{Velickovic:2018we}). Settings and optimization hyperparameters are held constant across tasks and baseline models for all encoding and rewiring methods, so that hyperparameter tuning can be ruled out as a source of performance gain. We obtain the settings for the individual encoding types via hyperparameter tuning. For rewiring, we use the heuristics introduced by~\citep{fesser2023mitigating}. For all preprocessing methods and hyperparameter choices, we record the test set accuracy of the settings with the highest validation accuracy. As there is a certain stochasticity involved, especially when training GNNs, we accumulate experimental results across 100 random trials. We report the mean test accuracy, along with the $95\%$ confidence interval. Details on all data sets, including summary statistics, can be found in appendix~\ref{sub:summary_stats}. Additional results on two of the long-range graph benchmark datasets introduced by~\citep{dwivedi2022LRGB} can also be found in appendix~\ref{sub:lrgb}.

\subsection{Results}

\subsubsection{Performance of LCP (Q1)}
\label{sec:exp-Q1}
\noindent Table \ref{table:1} presents the results of our experiments for graph classification (Enzymes, Imdb, Mutag, and Proteins datasets) and node classification (Cora and Citeseer) with no encodings (NO), Laplacian eigenvectors (LA)~\citep{JMLR:Dwivedi}, Random Walk transition probabilities (RW)~\citep{dwivedi2022graph}, substructures (SUB)~\citep{zhao2022from}, and our Local Curvature Profile (LCP). LCP outperforms all other encoding methods on all graph classification datasets. The improvement gained from using LCP is particularly impressive for GCN and GAT: the mean accuracy increases by between 10 (Enzymes) and almost 20 percent (Imdb) compared to using no encodings, and between 4 (Enzymes) and 14 percent compared to the second best encoding method.  \\

\noindent On the node classification data sets in Table \ref{table:1}, LCP is still competitive, but the performance gains are generally much smaller, and other encoding methods occasionally outperform LCP. Additional experiments on other node classification datasets with similar results can be found in appendix~\ref{sub:more_nodes}.

\begin{table}[h!]
\small
\centering
\begin{tabular}{|l|c|c|c|c|c|c|}
\hline
MODEL & ENZYMES & IMDB & MUTAG & PROTEINS & CORA & CITE. \\
\hline
GCN (NO) & $25.4 \pm 1.3$& $48.1 \pm 1.0$ & $62.7 \pm 2.1$ & $59.6 \pm 0.9$ & $86.6 \pm 0.8$ & $71.7 \pm 0.7$ \\
GCN (LA) & $26.5 \pm 1.1$ & $53.4 \pm 0.8$ & $70.8 \pm 1.7$ & $65.9 \pm 0.7$ & $88.0 \pm 0.9$ & $75.9 \pm 1.2$ \\
GCN (RW) & $29.7 \pm 2.5$ & $47.8 \pm 1.2$ & $67.0 \pm 3.2$ & $55.9 \pm 1.1$ & $87.6 \pm 1.1$ & $76.3 \pm 1.5$ \\
GCN (SUB) & $31.0 \pm 2.2$ & $51.2 \pm 1.0$ & $69.0 \pm 2.8$ & $61.1 \pm 0.8$ & $88.1 \pm 0.9$ & $76.9 \pm 1.1$\\
GCN (LCP) & $\mathcolor{blue}{35.4 \pm 2.6}$ & $\mathcolor{blue}{67.7 \pm 1.7}$ & $\mathcolor{blue}{79.0 \pm 2.9}$ & $\mathcolor{blue}{70.9 \pm 1.6}$ & $\mathcolor{blue}{88.9 \pm 1.0}$ & $\mathcolor{blue}{77.1 \pm 1.2}$\\
\hline
GIN (NO) & $29.7 \pm 1.1$ & $67.1 \pm 1.3$ & $67.5 \pm 2.7$ & $69.4 \pm 1.1$ & $76.3 \pm 0.6$ & $59.9 \pm 0.6$ \\
GIN (LA) & $26.6 \pm 1.9$ & $68.1 \pm 2.8$ & $74.0 \pm 1.4$ & $72.3 \pm 1.4$ & $\mathcolor{blue}{80.1 \pm 0.7}$ & $61.4 \pm 1.3$ \\
GIN (RW) & $27.7 \pm 1.4$ & $69.3 \pm 2.2$ & $76.0 \pm 2.7$ & $71.8 \pm 1.2$ & $78.1 \pm 1.0$ & $\mathcolor{blue}{64.3 \pm 1.1}$ \\
GIN (SUB) & $27.5 \pm 2.1$ & $68.2 \pm 2.1$ & $79.5 \pm 2.9$ & $71.5 \pm 1.1$ & $79.3 \pm 1.1$ & $62.1 \pm 1.3$ \\
GIN (LCP) & $\mathcolor{blue}{32.7 \pm 1.6}$ & $\mathcolor{blue}{70.6 \pm 1.3}$ & $\mathcolor{blue}{82.1 \pm 3.8}$ & $\mathcolor{blue}{73.2 \pm 1.2}$ & $79.8 \pm 1.1$ & $63.8 \pm 1.0$ \\
\hline
GAT (NO)  & $22.5 \pm 1.7$ & $47.0 \pm 1.4$ & $68.5 \pm 2.7$ & $72.6 \pm 1.2$ & $83.4 \pm 0.8$ & $72.6 \pm 0.8$ \\
GAT (LA)  & $23.2 \pm 1.3$ & $49.1 \pm 1.7$ & $71.0 \pm 2.6$ & $74.2 \pm 1.3$ & $84.7 \pm 1.0$ & $75.1 \pm 1.1$ \\
GAT (RW)  & $23.4 \pm 1.7$ & $49.7 \pm 1.3$ & $70.8 \pm 2.4$ & $73.0 \pm 1.2$ & $83.8 \pm 0.9$ & $75.8 \pm 1.4$ \\
GAT (SUB) & $25.0 \pm 1.4$ & $50.9 \pm 1.3$ & $72.4 \pm 2.7$ & $75.7 \pm 1.5$ & $85.0 \pm 1.2$ & $76.3 \pm 1.2$\\
GAT (LCP) & $\mathcolor{blue}{34.5 \pm 2.0}$ & $\mathcolor{blue}{66.2 \pm 1.1}$ & $\mathcolor{blue}{82.0 \pm 1.9}$ & $\mathcolor{blue}{78.5 \pm 1.4}$ & $\mathcolor{blue}{85.7 \pm 1.1}$ & $\mathcolor{blue}{76.6 \pm 1.2}$ \\
\hline
\end{tabular}
\caption{Graph (Enzymes, Imdb, Mutag, and Proteins) and node classification (Cora and Citeseer) accuracies of GCN, GIN, and GAT with positional, structural, or no encodings. Best results for each model highlighted in blue.}
\label{table:1}
\end{table}


\subsubsection{Combining structural and positional encodings (Q2)}
\label{sec:exp-Q2}
\noindent To answer the question if and when structural and positional encodings are complimentary, we repeat the experiments from the previous subsection, only this time we combine one of the two structural encodings considered (SUB and LCP) with one of the positional encodings (LA and RW). The results are shown in Table \ref{table:2}. While we find that the best performing combination always includes LCP in all scenarios, the performance gains achieved depend on the model used. Using GCN, combining LCP with Laplacian eigenvectors or Random Walk transition probabilities improves performance on three of our six datasets, with Mutag showing the most significant gains (plus 7 percent).  Using GIN, we see a performance improvement on five of our six datasets, with Proteins showing the largest gains (plus 3 percent). Finally, GAT shows performance gains on four of our six datasets, although those gains never exceed two 2 percent.

\noindent When comparing the accuracies attained by combining positional and structural encodings (Table \ref{table:2}) with the accuracies attained with only one positional or structural encoding (Table \ref{table:1}), we note that combinations generally result in better performance. However, we also note that the right choice of positional encoding seems to depend on the dataset: Random Walk transition probabilities lead to higher accuracy in 14 of 18 cases overall, Laplacian eigenvectors only in 4 cases. We believe that these differences stem from the different topologies of the graphs in our dataset, whose geometric properties may be captured better by certain encoding types than others (see also Table~\ref{tab:encoding}). 

\begin{table}[h!]
\small
\centering
\begin{tabular}{|l|c|c|c|c|c|c|}
\hline
MODEL & ENZYMES & IMDB & MUTAG & PROTEINS & CORA & CITE.\\
\hline
GCN (LCP, LA) & $30.2 \pm 2.1$ & $60.3 \pm 1.0$ & $83.4 \pm 3.1 $ & $\mathcolor{blue}{70.1 \pm 1.6}$ & $88.2 \pm 1.3$ & $76.7 \pm 1.4$ \\
GCN (LCP, RW) & $\mathcolor{blue}{34.8 \pm 1.7}$ & $\mathcolor{blue}{62.6 \pm 1.2}$ & $\mathcolor{blue}{86.1 \pm 2.7}$ & $69.7 \pm 1.5$ & $\mathcolor{blue}{89.4 \pm 1.4}$ & $\mathcolor{blue}{77.5 \pm 1.3}$ \\
GCN (SUB, LA) & $27.8 \pm 1.2$ & $48.4 \pm 1.0$ & $76.5 \pm 3.3$ & $65.8 \pm 1.3$ & $87.5 \pm 1.1$ & $76.2 \pm 1.3$ \\
GCN (SUB, RW) & $29.8 \pm 1.5$ & $54.3 \pm 1.4$ & $65.5 \pm 3.9$ & $59.1 \pm 1.3$ & $87.8 \pm 1.2$ & $74.4 \pm 1.5$ \\
\hline
GIN (LCP, LA) & $\mathcolor{blue}{33.6 \pm 2.7}$ & $70.8\pm 1.1$ & $79.2 \pm 1.9$ & $75.7 \pm 1.2$ & $\mathcolor{blue}{77.8 \pm 1.5}$ & $63.2 \pm 1.2$ \\
GIN (LCP, RW) & $31.7 \pm 2.4$ & $\mathcolor{blue}{72.1 \pm 1.7}$& $\mathcolor{blue}{82.4 \pm 1.8}$ & $\mathcolor{blue}{76.2 \pm 1.4}$ & $76.1 \pm 1.6$ & $\mathcolor{blue}{66.3 \pm 1.1}$ \\
GIN (SUB, LA) & $28.3 \pm 1.9$ & $68.3 \pm 1.4$ & $79.3 \pm 1.9$& $73.4 \pm 1.1$ & $77.4 \pm 1.3$ & $64.1 \pm 1.2$ \\
GIN (SUB, RW) & $28.7 \pm 2.4$ & $69.9 \pm 1.1$ & $81.2 \pm 2.4$ & $71.9 \pm 1.2$ & $75.8 \pm 1.6$ & $62.3 \pm 1.4$ \\
\hline
GAT (LCP, LA) & $33.6 \pm 2.1$ & $64.3 \pm 1.2$ & $81.0 \pm 2.7$ & $78.7 \pm 1.6$ & $\mathcolor{blue}{86.6 \pm 1.4}$ & $77.2 \pm 1.3$ \\
GAT (LCP, RW) & $\mathcolor{blue}{35.1 \pm 2.4}$ & $\mathcolor{blue}{65.2 \pm 1.8}$ & $\mathcolor{blue}{81.2 \pm 2.4}$ & $\mathcolor{blue}{79.4 \pm 1.7}$ & $86.1 \pm 1.5$ & $\mathcolor{blue}{77.4 \pm 1.5}$ \\
GAT (SUB, LA) & $22.9 \pm 1.8$ & $50.4 \pm 1.6$ & $72.5 \pm 2.9$ & $76.0 \pm 1.6$ & $85.3 \pm 1.3$ & $76.9 \pm 1.6$ \\
GAT (SUB, RW) & $26.5 \pm 1.3$ & $48.0 \pm 1.5$ & $71.2 \pm 2.8$ & $75.8 \pm 1.7$ & $85.7 \pm 1.4$& $77.1 \pm 1.4$ \\
\hline
\end{tabular}
\caption{Graph (Enzymes, Imdb, Mutag, and Proteins) and node classification (Cora and Citeseer) accuracies of GCN, GIN, and GAT with combinations of positional and structural encodings. Best results for each model highlighted in blue.}
\label{table:2}
\end{table}

\subsubsection{Comparing structural encoding and rewiring (Q3)}
\label{sec:exp-Q3}
\noindent In the last set of experiments, we compare the LCP, i.e. the use of curvature as a structural encoding, with curvature-based rewiring. We apply BORF~\cite{nguyen2023revisiting}, an ORC-based rewiring strategy, to the data sets used so far. Table~\ref{table:3} shows the performance of our three model architectures on the rewired graphs without any positional encodings (NO) and with Laplacian eigenvectors (LA) or Random Walk transition probabilities (RW). Comparing the (NO) rows in Table \ref{table:3} with the (LPC) rows in Table \ref{table:1}, we see that using the ORC to compute the LCP results in significantly higher accuracy on all data sets, compared to using it to rewire the graph. We believe that an intuitive explanation for these performance differences might be that rewiring uses a global curvature distribution, i.e. it compares the curvature values of all edges and then adds or removes edge at the extremes of the distribution. The LCP, on the other hand, is based on a local curvature distribution, so the LCP could be considered more faithful to the idea that the Ricci curvature is an inherently local notion.\\

\noindent As an extension, we also ask whether one should combine positional encodings with the LCP, or use them on the original graph before rewiring, to maintain some information of the original graph once rewiring has added and removed edges. Comparing the (LA) and (RW) rows in Table \ref{table:3} with the (LCP, LA) and (LCP, RW) rows in Table \ref{table:2}, we see that the LCP-based variant outperforms the rewiring-based one on all graph classification data sets. Combining rewiring and positional encodings attains the best performance in only two cases on the node classification data sets.\\

\begin{table}[h!]
\small
\centering
\begin{tabular}{|l|c|c|c|c|c|c|}
\hline
MODEL & ENZYMES & IMDB & MUTAG & PROTEINS & CORA & CITE.\\
\hline
GCN (NO) & $26.0 \pm 1.2$ & $48.6 \pm 0.9$ & $68.2 \pm 2.4$ & $61.2 \pm 0.9$ & $87.9 \pm 0.7$ & $73.4 \pm 0.6$ \\
GCN (LA) & $26.3 \pm 1.7$ & $53.7 \pm 1.2$ & $75.5 \pm 2.8$ & $64.4 \pm 1.2$ & $86.1 \pm 1.0$ & $74.7 \pm 0.8$ \\
GCN (RW) & $24.0 \pm 1.8$ & $49.4 \pm 1.1$ & $74.0 \pm 2.8$ & $61.9 \pm 1.1$ & $88.2 \pm 0.9$ & $75.7 \pm 0.8$ \\
\hline
GIN (NO) & $31.9 \pm 1.2$ & $67.7 \pm 1.5$ & $75.4 \pm 2.8$ & $72.3 \pm 1.2$ & $78.4 \pm 0.8$ & $63.1 \pm 0.7$ \\
GIN (LA) & $28.1 \pm 1.6$ & $70.2 \pm 2.1$ & $78.3 \pm 2.1$ & $75.2 \pm 1.3$ & $77.3 \pm 1.1$ & $64.3 \pm 1.0$ \\
GIN (RW) & $28.4 \pm 1.7$ & $71.7 \pm 2.4$ & $78.0 \pm 1.6$ & $74.1 \pm 1.3$ & $78.6 \pm 1.3$ & $64.2 \pm 1.1$ \\
\hline
GAT (NO) & $21.7 \pm 1.5$ & $47.1 \pm 1.6$ & $72.2 \pm 2.1$ & $73.6 \pm 1.4$ & $83.8 \pm 1.1$ & $73.4 \pm 0.9$ \\
GAT (LA) & $25.3 \pm 1.5$ & $52.9 \pm 1.2$ & $74.4 \pm 1.8$ & $74.9 \pm 1.5$ & $85.3 \pm 1.5$ & $75.4 \pm 1.2$ \\
GAT (RW) & $22.0 \pm 1.7$ & $52.1 \pm 0.9$ & $68.5 \pm 2.0$ & $74.1 \pm 1.7$ & $84.2 \pm 1.2$ & $76.1 \pm 1.6$ \\
\hline
\end{tabular}
\caption{Graph (Enzymes, Imdb, Mutag, and Proteins) and node classification (Cora and Citeseer) accuracies of GCN, GIN, and GAT with positional, structural, or no encodings, on graphs rewired using BORF.}
\label{table:3}
\end{table}

\noindent \textbf{LCP and GNN depth.} Rewiring strategies are often used to mitigate over-smoothing, for example by removing edges in particularly dense neighborhoods of the graph (\cite{fesser2023mitigating}), which in turn allows for the use of deeper GNNs. As Figure \ref{fig:depth} shows, using the LCP as a structural encoding can help in this regard as well: the average accuracy of a GCN with LCP structural encodings (y-axis) does not decrease faster with the number of layers (x-axis) than the average accuracy of a GCN on a rewired graph. Both ways of using ORC lose between 4 and 5 percent as we move from 5 GCN layers to 10. 

\begin{wrapfigure}{R}{0.6\linewidth}
\centering
\includegraphics[scale=0.325]{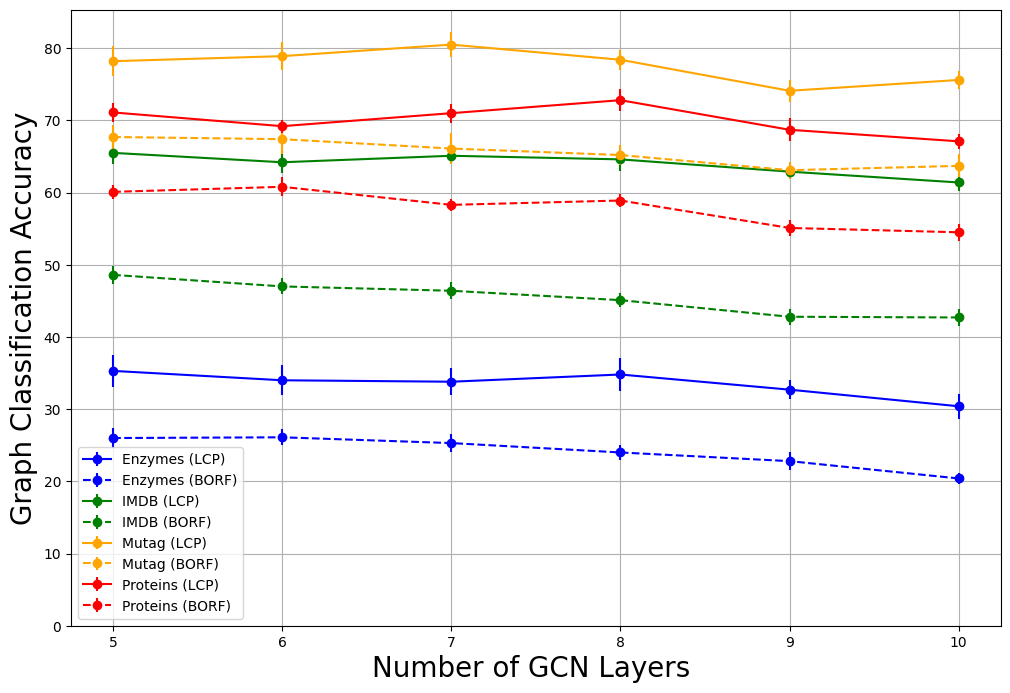}
    \label{fig:depth}
    \vspace{-20pt}
    \caption{Graph classification accuracy with increasing number of GCN layers. Dashed lines show accuracies using BORF, normal lines accuracy using the LCP.}
\end{wrapfigure}

\subsection{Different notions of curvature}
\label{sec:exp-curv}
\noindent So far, we have implemented LCP with Ollivier-Ricci curvature in all experiments. However, other notions of curvature have been considered in Graph Machine Learning applications, including in rewiring. (\cite{fesser2023mitigating}) use the Augmented Forman-Ricci curvature (AFRC), which enriches the Forman-Ricci curvature (FRC) by considering 3-cycles (AFRC-3) or 3- and 4-cycles (AFRC-4). For details on FRC and AFRC, see appendix~\ref{supp:FRC}. AFRC-3 and AFRC-4 in particular are cheaper to compute than the ORC and have been show to result in competitive rewiring methods (\cite{fesser2023mitigating}). As such, one might ask if we could not also use one of these curvature notions to compute the LCP.

\begin{table}[h!]
\small
\centering
\begin{tabular}{|l|c|c|c|c|}
\hline
LCP Curvature & ENZYMES & IMDB & MUTAG & PROTEINS \\
\hline
FRC & $27.4 \pm 1.1$ & $\mathcolor{blue}{69.6 \pm 1.1}$ & $72.0 \pm 2.1$ & $64.1 \pm 1.3$\\
AFRC-3 & $28.0 \pm 1.8$ & $50.7 \pm 1.1$ & $72.3 \pm 1.8$ & $62.6 \pm 1.5$\\
AFRC-4 & $29.2 \pm 2.4$ & $54.4 \pm 1.6$& $74.5 \pm 3.1$ & $64.2 \pm 1.5$\\
ORC & $\mathcolor{blue}{35.4 \pm 2.6}$ & $67.7 \pm 1.7$ & $\mathcolor{blue}{79.0 \pm 2.9}$ & $\mathcolor{blue}{70.9 \pm 1.6}$ \\
\hline
\end{tabular}


\caption{Graph classification accuracies of GCN with LCP structural encodings using FRC, AFRC-3, AFRC-4, and ORC. Best results highlighted in blue.}
\label{table:4}
\end{table}

\noindent We train the same GCN used in the previous sections with the LCP structural encoding on our graph classification data sets. Table \ref{table:4} shows the mean accuracies attained using different curvatures to compute the LCP. We note that all four curvatures improve upon the baseline (no structural encoding), and that generally, performance increases as we move from the FRC to its augmentations and finally to the ORC. This might not come as a surprise, since the FRC and its augmentations can be thought of as low-level approximations of the ORC (\cite{jost2021characterizations}). Our results therefore seem to suggest that when we use curvature as a structural encoding, GNNs are generally able to extract the additional information contained in the ORC. This does not seem to be true for curvature-based rewiring methods, where we can use AFRC-3 or AFRC-4 without significant performance drops (\cite{nguyen2023revisiting}, \cite{fesser2023mitigating}). \\

\noindent Our experimental results also suggest that the classical Forman curvature is surprisingly effective as a faster substitute for ORC. The choice of which higher-order structures to encode (i.e., AFRC-3 vs. AFRC-4) impacts the ability of LCP to characterize substructures with higher-order cycles. Hence, depending on the graph topology, the choice of FRC or AFRC over ORC will affect performance. For example, in the case of Imdb, using FRC seems to work well, which we attribute to the special topology of the graphs in the Imdb dataset (see appendix~\ref{sub:figures} for more details).

\section{Discussion and Limitations}
In this paper we have introduced Local Curvature Profiles (LCP) as a novel structural encoding, which endows nodes with a geometric characterization of their neighborhoods with respect to discrete curvature.  Our experiments have demonstrated that LCP outperforms existing encoding approaches and that further performance increases can be achieved by combining LCP with global positional encodings. In comparison with curvature-based rewiring, a previously proposed technique for utilizing discrete curvature in GNNs, LCP achieves superior performance.\\

\noindent While we establish that LCP improves the expressivity of MPGNNs,  the relationship between LCP and the $k$-WL hierachy remains open. Our experiments have revealed differences in the effectiveness of encodings, including LCP, between GNN architectures.
An important direction for future work is a more detailed analysis of the influence of model design (type of layer,  depths,  choice of hyperparameter) with respect to the graph learning task (node- vs. graph-level) and the topology of the input graph(s).  In addition, while we have investigated the choice of curvature notion from a computational complexity perspective,  further research on the suitability of each notion for different tasks and graph topologies is needed.

\section*{Acknowledgements}
MW was partially supported by NSF award 2112085.

\bibliographystyle{plainnat}
\bibliography{iclr2024_conference} 

\clearpage
\appendix
\section{Appendix}

\localtableofcontents

\newpage

\subsection{Other curvature notions}

\subsubsection{Forman's Ricci curvature}\label{supp:FRC}

\noindent Ricci curvature is a classical tool in Differential Geometry, which establishes a connection between the geometry of the manifold and local volume growth.  Discrete notions of curvature have been proposed via \emph{curvature analogies}, i.e.,  notions that maintain classical relationships with other geometric characteristics. Forman~\citep{forman} introduced a notion of curvature on CW complexes, which allows for a discretization of a crucial relationship between Ricci curvature and Laplacians, the Bochner-Weizenb{\"o}ck equation. In the case of a simple, undirect, and unweighted graph $G = (V, E)$, the Forman-Ricci curvature of an edge $e = (u, v) \in E$ can be expressed as
\begin{equation*}
    \mathcal{FR} (u,v) = 4 - \deg(u) - \deg(v)
\end{equation*}

\subsubsection{Augmented Forman-Ricci curvature.}

\noindent The edge-level version of Forman's curvature above also allows for evaluating curvature contributions of higher-order structures. Specifically, (\cite{fesser2023mitigating}) consider notions that evaluate higher-order information encoded in cycles of order $\leq k$ (denoted as $\mathcal{AF}_k$), focusing on the cases $k=3$ and $k=4$:
\begin{equation*}
\begin{split}
    \mathcal{AF}_3 (u,v) &= 4 - \deg(u) - \deg(v) + 3 \triangle(u,v)\\
    \mathcal{AF}_4 (u,v) &= 4 - \deg(u) - \deg(v) + 3 \triangle(u,v) + 2 \square(u,v) \; ,
\end{split}
\end{equation*}
where $\triangle(u,v)$ and $\square(u,v)$ denote the number of triangles and quadrangles containing the edge $(u,v)$. The derivation of those notions follows directly from~\citep{forman} and can be found, e.g., in~\citep{tian2023curvature}.

\subsubsection{Combinatorial Approximation of ORC}
Below, we will utilize combinatorial upper and lower bounds on ORC for a more efficiently computable approximation of local curvature profiles. The bounds were first proven by~~\citep{jost2014ollivier}, we recall the result below for the convenience of the reader. A version for weighted graphs can be found in~\citep{tian2023curvature}.
\begin{theorem}
We denote with $\#(u,v)$ the number of triangles that include the edge $(u,v)$ and $d_v$ the degree of $v$. Then the following bounds hold: 
\begin{small}
    \begin{equation*}
		 - \left(
			1 - \frac{1}{d_v} - \frac{1}{d_u} - \frac{\#(u,v)}{d_u \wedge d_v}\right)_+ 
   - \left(
			1 - \frac{1}{d_v} - \frac{1}{d_u} - \frac{\#(u,v)}{d_u \vee d_v}\right)_+ + \frac{\#(u,v)}{d_u \vee d_v} 
			\leq \kappa (u,v) \leq \frac{\#(u,v)}{d_u \vee d_v} 
		\end{equation*}
\end{small}
 Here, we have used the shorthand $a \wedge b := \min \{ a,b \}$ and $a \vee b := \max \{ a,b \}$.
\end{theorem}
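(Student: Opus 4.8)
The plan is to establish the two inequalities separately by reasoning directly about transport plans for the uniform measures $m_u, m_v$ on the $1$-hop neighborhoods, using the Kantorovich--Rubinstein duality only implicitly. Fix the edge $(u,v)$ and abbreviate $t := \#(u,v)$, $d := d_u \vee d_v$, $d' := d_u \wedge d_v$. The common neighbors of $u$ and $v$ (the apexes of the $t$ triangles on $(u,v)$) are the vertices where $m_u$ and $m_v$ both place mass $\tfrac{1}{d_u}$ and $\tfrac{1}{d_v}$ respectively; the key structural fact is that any such vertex lies at distance $1$ from both $u$ and $v$, so no transport is needed beyond matching the overlapping mass.

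For the \textbf{upper bound}, I would exhibit an explicit feasible transport plan and bound its cost from above, which by $W_1 = \inf$ gives a lower bound on $W_1$ and hence an upper bound on $\kappa = 1 - W_1/d_G(u,v)$; here $d_G(u,v) = 1$ since $(u,v) \in E$. The cleaner route is actually the dual/greedy argument: on each of the $t$ common-neighbor vertices, the amount of mass that can be left in place (moved distance $0$) is at most $\tfrac{1}{d}$ (the smaller of the two per-vertex masses would be $\tfrac1d$ when one degree equals $d$; in general one checks that the total ``cost-zero'' overlap on triangle apexes contributes a saving of exactly $\tfrac{t}{d}$ relative to the trivial bound $W_1 \le 1$). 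Every other unit of mass must move at least distance $1$, so $W_1(m_u,m_v) \ge 1 - \tfrac{t}{d}$, giving $\kappa(u,v) \le \tfrac{t}{d}$. One must be slightly careful that mass sitting on $u$ itself (from $m_v$, since $u \in \mathcal N_v$) and on $v$ (from $m_u$) is handled: this mass can be swapped at cost $\le 1$ each, which is already accounted for, so it does not affect the bound.

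For the \textbf{lower bound}, I would construct an explicit feasible coupling $m \in \Gamma(m_u,m_v)$ and compute its cost, yielding $W_1 \le \text{cost}$ and hence $\kappa \ge 1 - \text{cost}$. The coupling: first match as much mass as possible in place on the $t$ common neighbors and on the vertices $u,v$ (using the closed-form $(1 - \tfrac{1}{d_v} - \tfrac{1}{d_u} - \tfrac{t}{d'})_+$ and $(1 - \tfrac{1}{d_v} - \tfrac{1}{d_u} - \tfrac{t}{d})_+$ type quantities to track how much ``private'' mass remains on each side after the forced overlaps), then transport the residual mass along edges of length $1$ between private neighbors of $u$ and private neighbors of $v$, and finally transport whatever cannot be matched at distance $1$ at distance $2$ (this is where the $\left(1 - \cdots\right)_+$ terms with the two different denominators $d'$ and $d$ enter, corresponding to a two-stage ``fill the nearer slots first'' accounting). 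Summing the per-unit costs of this explicit plan reproduces exactly the stated lower expression. The main obstacle I anticipate is the bookkeeping in this last step: verifying that the proposed plan is genuinely a valid coupling (marginals exactly $m_u$ and $m_v$) in all degree regimes $d_u \le d_v$ versus $d_u \ge d_v$, and that the distance-$2$ moves are actually available in the graph (i.e., that a length-$2$ path exists between any private neighbor of $u$ and any private neighbor of $v$ through $u$ or $v$ themselves, which holds since $u \sim v$). Once the plan is pinned down, the cost computation is a routine summation, and I would cite \citep{jost2014ollivier} for the original detailed verification rather than reproducing every case.
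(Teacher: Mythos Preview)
The paper does not actually prove this theorem: it appears in the appendix with the remark that ``the bounds were first proven by~\citep{jost2014ollivier}, we recall the result below for the convenience of the reader,'' and no argument is given. So there is no ``paper's own proof'' to compare against; the relevant benchmark is Jost--Liu's original argument, which your sketch broadly follows (a total-variation lower bound on $W_1$ for the upper bound on $\kappa$, and an explicit coupling for the lower bound).

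Two small points are worth tightening. First, your opening sentence for the upper bound is inverted---an explicit transport plan yields an \emph{upper} bound on $W_1$, not a lower one---though you immediately pivot to the correct overlap argument, which is indeed the right mechanism: the pointwise overlap $\sum_z \min(m_u(z),m_v(z))$ equals $t/d$ (only the $t$ common neighbors contribute, each with $\min(1/d_u,1/d_v)=1/d$), and since all nonzero graph distances are $\ge 1$ one gets $W_1 \ge 1 - t/d$. Second, in the explicit coupling for the lower bound, residual mass does \emph{not} always reach its destination at distance $2$: a private neighbor $x$ of $u$ and a private neighbor $y$ of $v$ are in general only connected via $x\!-\!u\!-\!v\!-\!y$, i.e.\ distance $3$. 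In the Jost--Liu accounting, the two $(\,\cdot\,)_+$ terms with denominators $d'$ and $d$ correspond respectively to the mass forced to travel distance $\ge 2$ and distance $\ge 3$ (each unit of such mass adding $1$ to the cost over the baseline $1$), not to two kinds of distance-$2$ moves. Your instinct that this is ``the main obstacle'' is right, and deferring the case analysis to \citep{jost2014ollivier} is exactly what the paper itself does.
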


\subsubsection{Computational Complexity}
As discussed in the main text, the computation of the ORC underlying the LCP complexity has a complexity of $O(\vert E \vert d_{\max}^3)$. In contrast, the combinatorial bounds in the previous section require only the computation of node degree and triangle counts, which results in a reduction of the complexity to $O(\vert E \vert d_{\max})$. Classical Forman curvature ($\mathcal{FR}(\cdot,\cdot)$) can be computed in $O(\vert V \vert)$, but the resulting geometric characterization is often less informative. The more informative augmented notions have complexities $O(\vert E \vert d_{\max})$ ($\mathcal{AF}_3$) and $O(\vert E \vert d_{\max}^2)$ ($\mathcal{AF}_4$).

\subsection{Best hyperparameter choices}

\noindent Our hyperparameter choices for structural and positional encodings are largely based on the hyperparameters reported in (\cite{JMLR:Dwivedi}, \cite{dwivedi2022graph}, \cite{zhao2022from}). We used their values as starting values for a grid search, and found a random walk length of 16 to work best for RWPE, 8 eigenvectors to work best for LAPE, and a walk length of 10 to work best for SUB, with minimal differences across graph classification datasets. As mentioned in the main text, we did not use hyperparameter tuning for BORF, both rather used the heuristics proposed in (\cite{fesser2023mitigating}).

\subsection{Model architectures}

\noindent \textbf{Node classification.} We use a GNN with 3 layers and hidden dimension 128. We further use a dropout probability of 0.5, and a ReLU activation. We use this architecture for all node classification datasets in the paper.\\

\noindent \textbf{Graph classification.} We use a GNN with 4 layers and hidden dimension 64. We further use a dropout probability of 0.5, and a ReLU activation. We use this architecture for all graph classification datasets in the paper.

\subsection{Alternative definitions of the LCP}
\label{sub:lcp_alt}

\noindent In this subsection, we investigate several alternative definitionso of the LCP. As before, let $\text{CMS}(v) := \{\kappa(u, v) : (u, v) \in E \}$ denote the curvature multiset. Then we define the Local Curvature Profile using extreme values as
\begin{equation*}
    \text{LCP}(v) := \left[(\text{CMS}(v))_1, (\text{CMS}(v))_2, (\text{CMS}(v))_3, (\text{CMS}(v))_{n-1}, (\text{CMS}(v))_n\right]
\end{equation*}

\noindent where $n := \text{deg}(v)$ and $(\text{CMS}(v))_1 \leq (\text{CMS}(v))_1 \leq ... \leq (\text{CMS}(v))_n$. Similarly, we can define the LCP using the minimum and maximum of the CMS only, i.e.
\begin{equation*}
    \text{LCP}(v) := \left[\min(\text{CMS}(v)), \max(\text{CMS}(v))\right]
\end{equation*}

\noindent or use the combinatorial upper and lower bounds for the ORC introduced earlier and then define the LCP using the minimum of the lower bounds and the maximum of the upper bounds. Formally, we let $\text{CMS}_u(v) := \{\kappa_u(u, v) : (u, v) \in E \}$ and $\text{CMS}_l(v) := \{\kappa_l(u, v) : (u, v) \in E \}$, where $\kappa_u(u, v)$ and $\kappa_l(u, v)$ are the combinatorial upper (lower) bounds of $\kappa(u, v)$. The LCP is then given by 
\begin{equation*}
    \text{LCP}(v) := \left[\min(\text{CMS}_l(v)), \max(\text{CMS}_u(v))\right]
\end{equation*}

\noindent We present the graph classification results attained using these alternative definitions of the LCP in Table \ref{table:alt_lcp}.

\begin{table}[h!]
\centering
\begin{tabular}{|l|c|c|c|c|}
\hline
LCP & ENZYMES & IMDB & MUTAG & PROTEINS \\
\hline
Summary Statistics & $35.4 \pm 2.6$ & $67.7 \pm 1.7$ & $79.0 \pm 2.9$ & $70.9 \pm 1.6$ \\
Extreme Values & $36.2 \pm 1.9$ & $65.6 \pm 2.1$ & $80.1 \pm 2.4$ & $70.4 \pm 1.5$\\
Min, Max only & $33.5 \pm 1.9$ & $64.6 \pm 1.4$ & $78.5 \pm 2.4$ & $72.1 \pm 1.1$ \\
Combinatorial Approx. & $32.7 \pm 1.7$ & $63.9 \pm 1.3$ & $84.2 \pm 2.1$ & $70.9 \pm 1.3$\\
\hline
\end{tabular}


\caption{Graph classification accuracies of GCN with LCP structural encodings using summary statistics (top) and using the most extreme values (bottom).}
\label{table:alt_lcp}
\end{table}

\subsection{Comparison with Curvature Graph Network}

\textbf{Attention weights vs. Curvature.} We remark on a previous work that utilizes discrete curvature in the design of MPGNN architectures, aside from the rewiring techniques discussed earlier. Curvature Graph Networks (\cite{Ye2020Curvature}) proposes to weight messages during the updating of the node representations by a function of the curvature of the corresponding edge. Formally, their version of the previously introduced update is given by
\begin{equation*}
\mathbf{x}_v^{k+1} = \phi_k \left( \bigoplus_{p \in \tilde{\mathcal{N}}_v} \tau_{(v, p)}^k \mathbf{W}^k \mathbf{x}_p^k\right)    \; , 
\end{equation*}
where $\mathbf{W}^k$ is a learned weight matrix and $\tau_{(v, p)}^k$ is a function of the ORC of the edge $(v, p)$, which is learned using an MLP. We note that this is analogous to the weighting of messages in GAT (\cite{Velickovic:2018we}), where self attention plays the role of $\tau_{(v, p)}^k$. In fact, Curvature Graph Network attains similar performance to GAT, and even outperforms it on some node classification tasks. However, we attain even better performance at the same computational cost using the LCP (Table \ref{table:curvgn}).

\begin{table}[h!]\label{tab:afrc_orc_heuristic}
\centering
\begin{tabular}{|l|c|c|}
\hline
 & CORA & CITESEER \\
\hline
CurvGN-1  & $83.1 \pm 0.8$ & $71.7 \pm 1.0$ \\
CurvGN-n  & $83.2 \pm 0.9$ & $72.4 \pm 0.9$ \\
GCN (LCP) & $88.9 \pm 1.0$ & $77.1 \pm 1.2$ \\
GAT (LCP) & $85.7 \pm 1.1$ & $76.6 \pm 1.2$ \\
\hline
\end{tabular}

\caption{Node classification accuracy of curvature graph network vs. GCN and GAT with LCP structural encodings.}
\label{table:curvgn}
\end{table}

\newpage

\subsection{Results on long-range graph benchmark datasets}
\label{sub:lrgb}

\begin{table}[h!]
\small
\centering
\begin{tabular}{|l|c|c|}
\hline
MODEL & PEPTIDES-FUNC & PEPTIDES-STRUCT  \\
\hline
GCN (NO) & $40.7 \pm 2.0$ & $0.379 \pm 0.013$ \\
GCN (LA) & $43.5 \pm 1.8$ & $0.356 \pm 0.014$ \\
GCN (RW) & $43.2 \pm 2.1$ & $0.354 \pm 0.019$ \\
GCN (SUB) & $42.6 \pm 2.0$ & $0.360 \pm 0.016$\\
GCN (LCP) & $\mathcolor{blue}{44.4 \pm 2.2}$ & $\mathcolor{blue}{0.352 \pm 0.017}$ \\
\hline
GIN (NO) & $46.2 \pm 2.2$ & $0.387 \pm 0.023$ \\
GIN (LA) & $48.8 \pm 1.6$ & $0.364 \pm 0.021$ \\
GIN (RW) & $48.0 \pm 2.1$ & $0.368 \pm 0.023$ \\
GIN (SUB) & $47.3 \pm 2.4$ & $0.375 \pm 0.020$\\
GIN (LCP) & $\mathcolor{blue}{49.6 \pm 2.2}$ & $\mathcolor{blue}{0.361 \pm 0.022}$\\
\hline
\end{tabular}
\caption{Mean classification accuracy (Peptides-func) and mean absolute error (Peptides-struct) of GCN and GIN with positional, structural, or no encodings. Best results for each model highlighted in blue. Note that for Peptides-struct, lower is better.}
\label{table:5}
\end{table}

\begin{table}[h!]
\small
\centering
\begin{tabular}{|l|c|c|}
\hline
MODEL & PEPTIDES-FUNC & PEPTIDES-STRUCT \\
\hline
GCN (NO) & $43.8 \pm 2.6$ & $0.365 \pm 0.018$ \\
GCN (LA) & $45.2 \pm 2.3$ & $0.348 \pm 0.021$ \\
GCN (RW) & $44.5 \pm 2.2$ & $0.341 \pm 0.022$ \\
\hline
GIN (NO) & $49.3 \pm 1.8$ & $0.378 \pm 0.025$ \\
GIN (LA) & $50.1 \pm 2.1$ & $0.352 \pm 0.022$ \\
GIN (RW) & $50.4 \pm 2.5$ & $0.350 \pm 0.024$ \\
\hline
\end{tabular}
\caption{Mean classification accuracy (Peptides-func) and mean absolute error (Peptides-struct) of GCN and GIN with positional encodings on graphs rewired using BORF. Note that for Peptides-struct, lower is better.}
\label{table:6}
\end{table}

\subsection{Results on other node classification datasets}
\label{sub:more_nodes}

\begin{table}[h!]
\small
\centering
\begin{tabular}{|l|c|c|c|}
\hline
MODEL & CORNELL & TEXAS & WISCONSIN \\
\hline
GCN (NO) & $46.8 \pm 2.6$ & $44.3 \pm 2.5$ & $43.8 \pm 0.7$ \\
GCN (LA) & $49.4 \pm 2.1$ & $50.5 \pm 2.1$ & $47.3 \pm 1.1$ \\
GCN (RW) & $48.3 \pm 2.4$ & $49.2 \pm 2.3$ & $47.1 \pm 1.3$ \\
GCN (SUB) & $49.6 \pm 2.1$ & $52.3 \pm 2.4$ & $46.8 \pm 1.2$ \\
GCN (LCP) & $\mathcolor{blue}{50.4 \pm 2.5}$ & $\mathcolor{blue}{56.8 \pm 2.6}$ & $\mathcolor{blue}{47.4 \pm 1.4}$ \\
\hline
GIN (NO) & $36.7 \pm 2.3$ & $54.1 \pm 3.0$ & $48.6 \pm 2.2$ \\
GIN (LA) & $\mathcolor{blue}{51.3 \pm 2.0}$ & $\mathcolor{blue}{67.8 \pm 3.3}$ & $\mathcolor{blue}{57.8 \pm 2.1}$ \\
GIN (RW) & $49.6 \pm 2.2$ & $62.2 \pm 3.5$ & $53.1 \pm 2.4$ \\
GIN (SUB) & $47.8 \pm 1.9$ & $60.5 \pm 2.8$ & $54.0 \pm 2.3$ \\
GIN (LCP) & $50.5 \pm 2.6$ & $63.6 \pm 3.2$ & $53.8 \pm 2.4$ \\
\hline
\end{tabular}
\caption{Node classification accuracies of GCN, GIN, and GAT with positional, structural, or no encodings. Best results for each model highlighted in blue.}
\label{table:more_nodes}
\end{table}

\newpage

\subsection{Additional figures}
\label{sub:figures}

\begin{figure*}[h]
    \centering
    {\includegraphics[width=.24\linewidth]{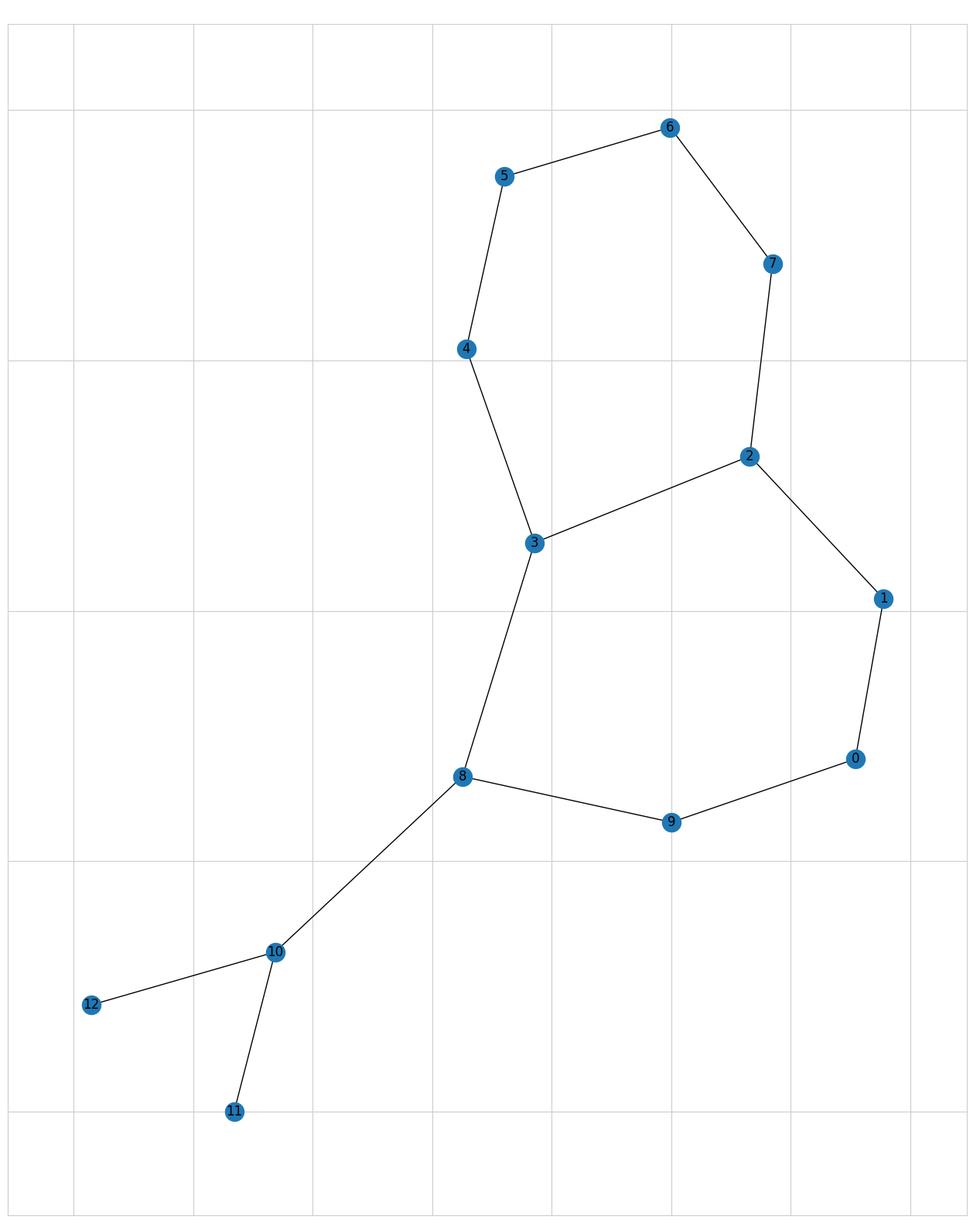}}
    \hfill
    {\includegraphics[width=.24\linewidth]{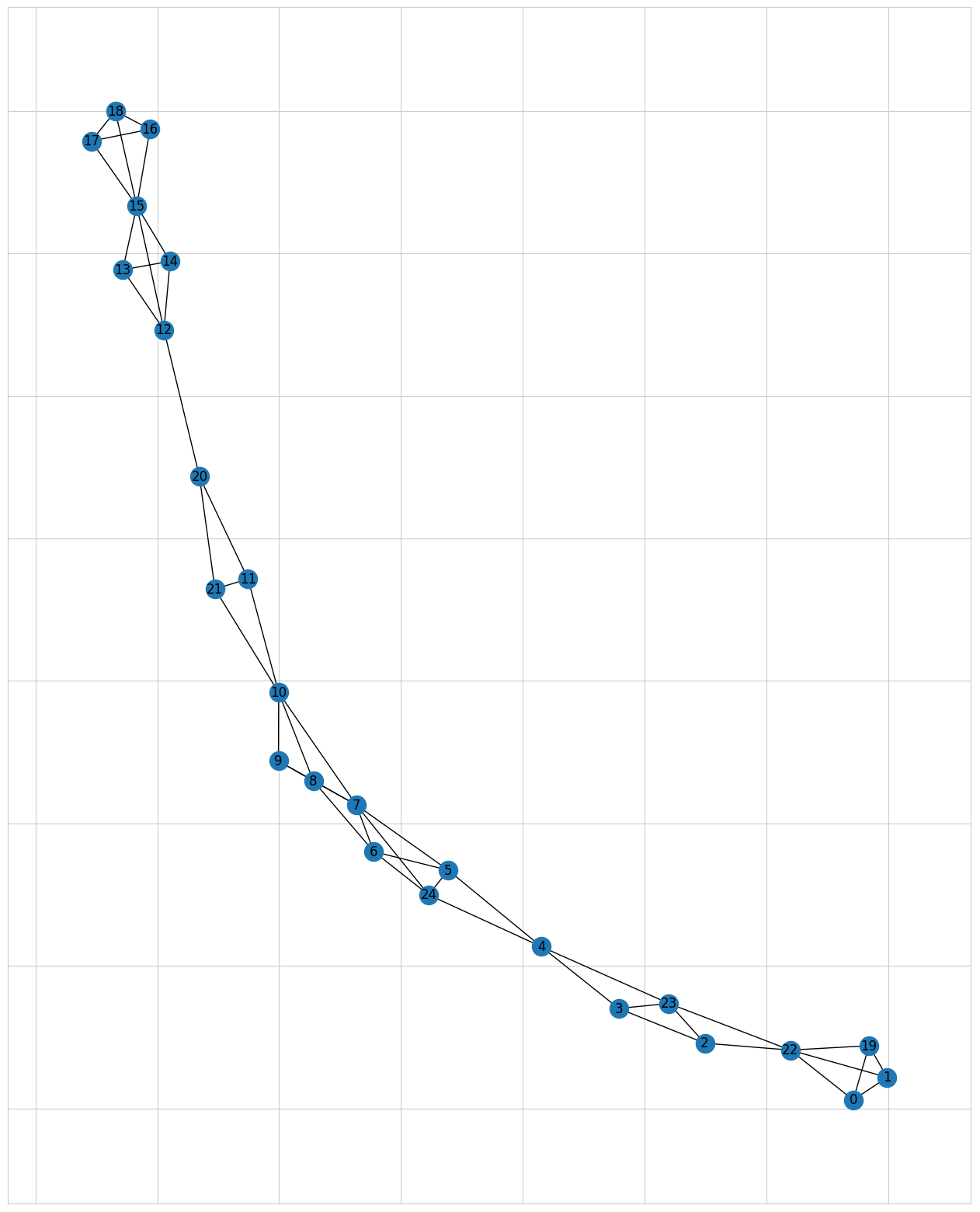}}
    \hfill
    {\includegraphics[width=.24\linewidth]{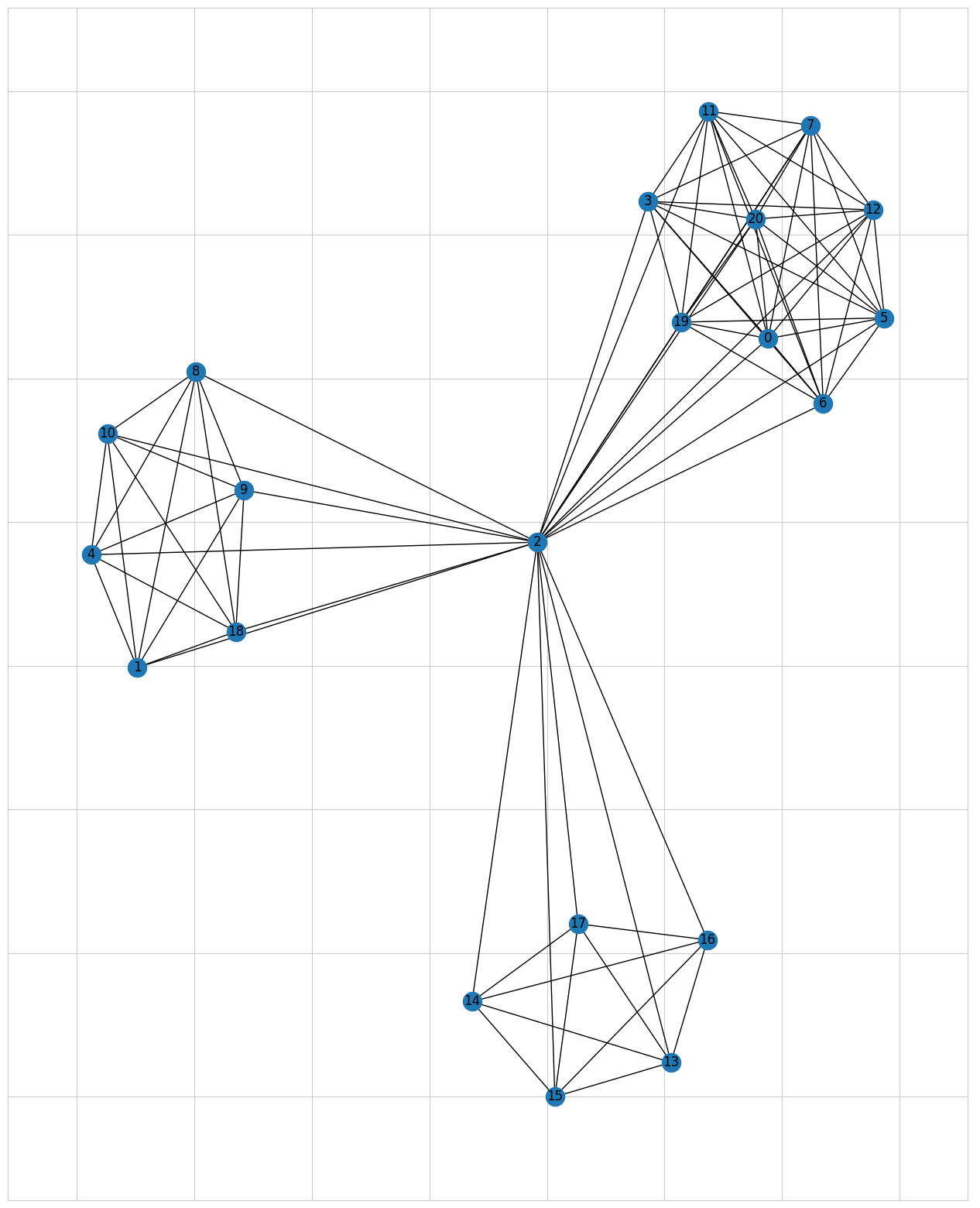}}
    \hfill
    {\includegraphics[width=.24\linewidth]{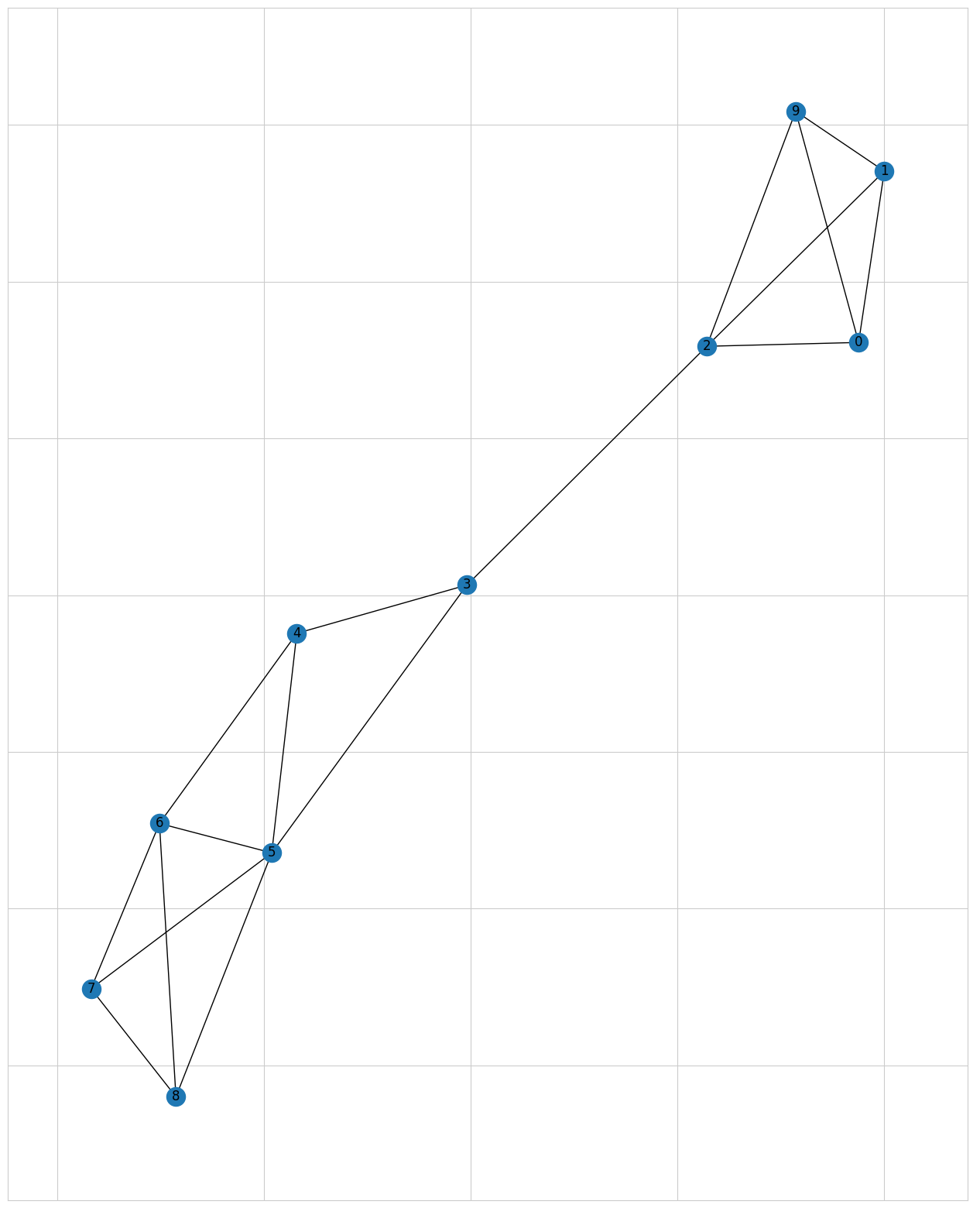}}\\
    \medskip
    {\includegraphics[width=.24\linewidth]{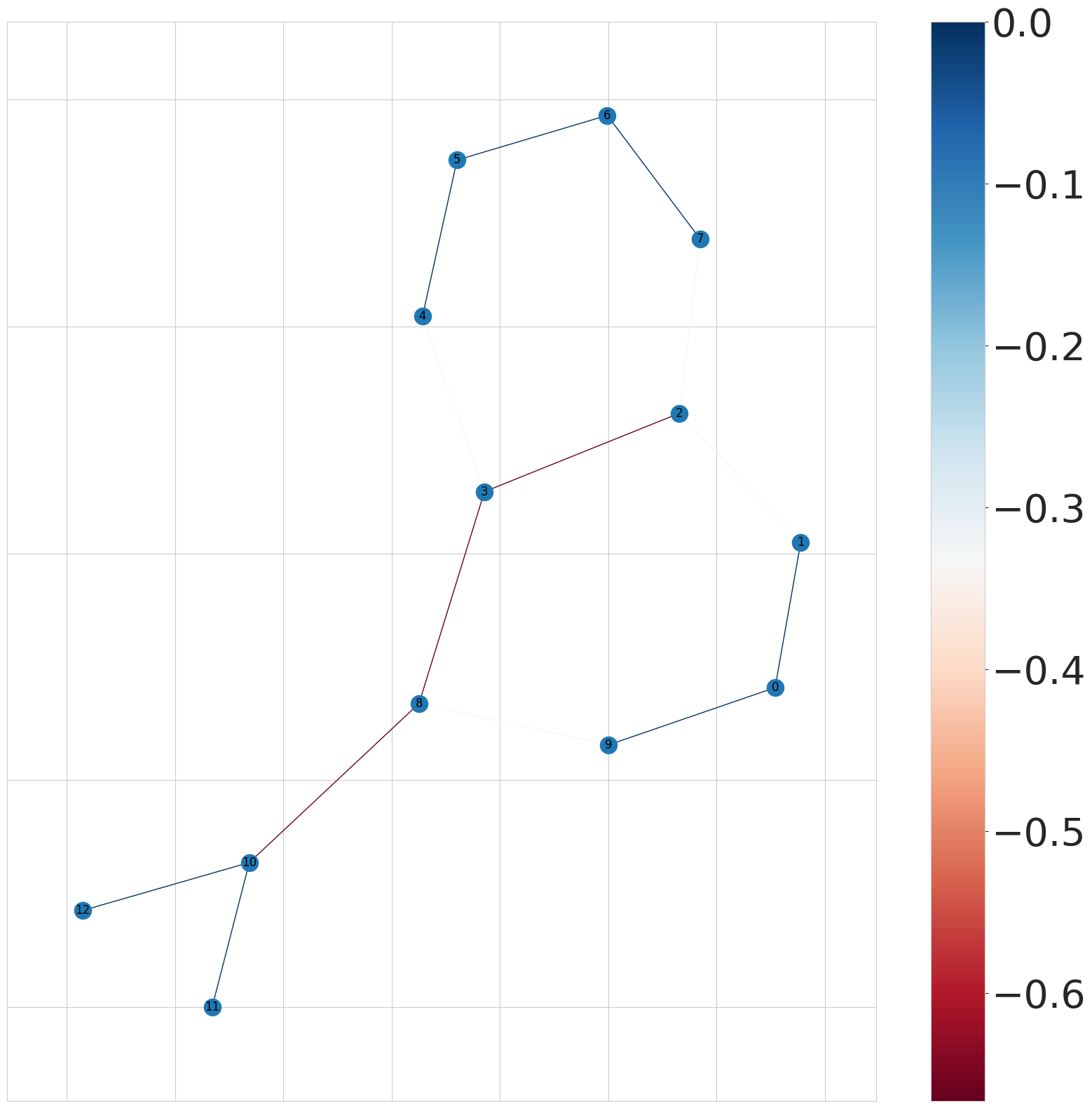}}
    \hfill
    {\includegraphics[width=.24\linewidth]{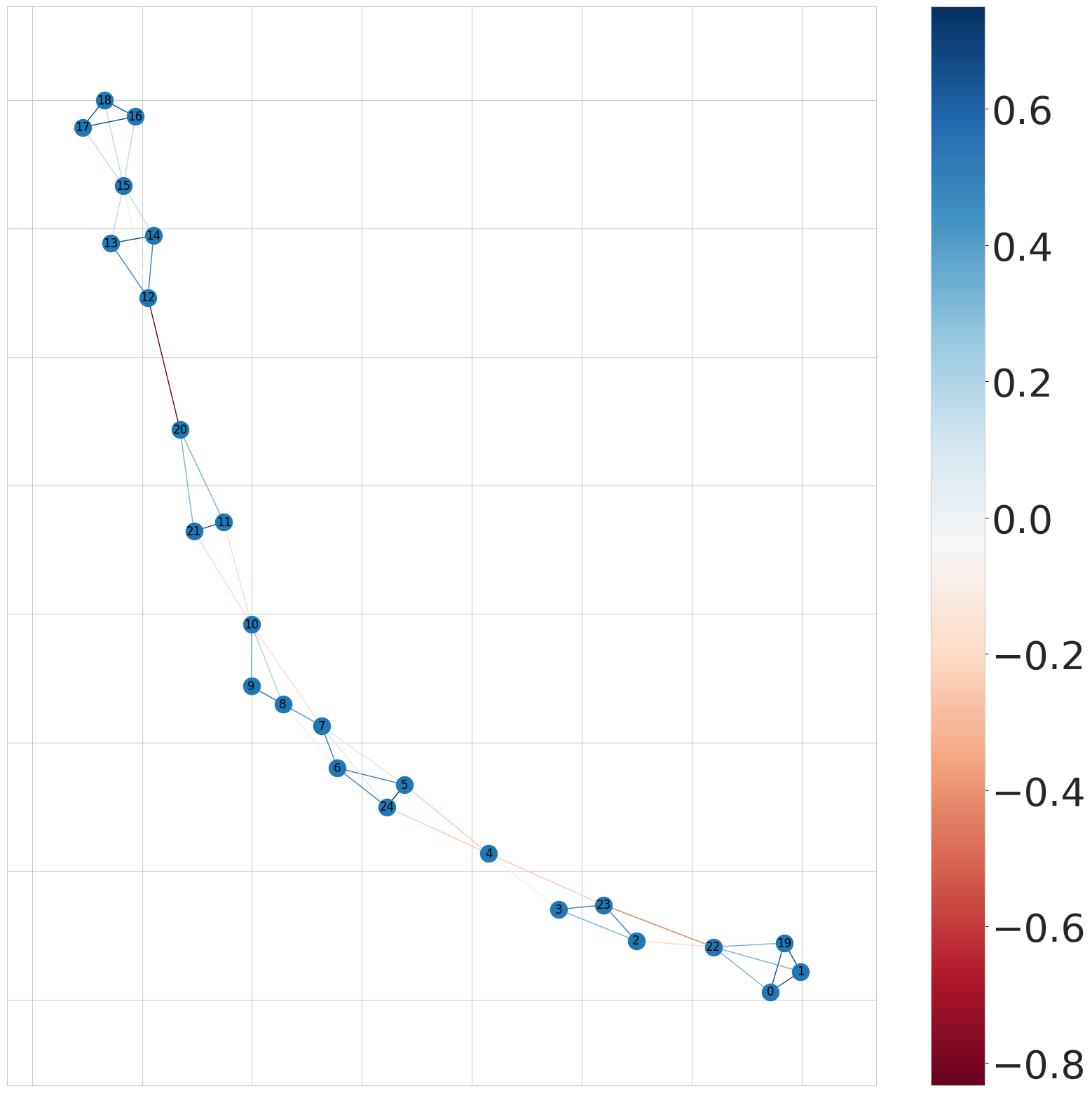}}
    \hfill
    {\includegraphics[width=.24\linewidth]{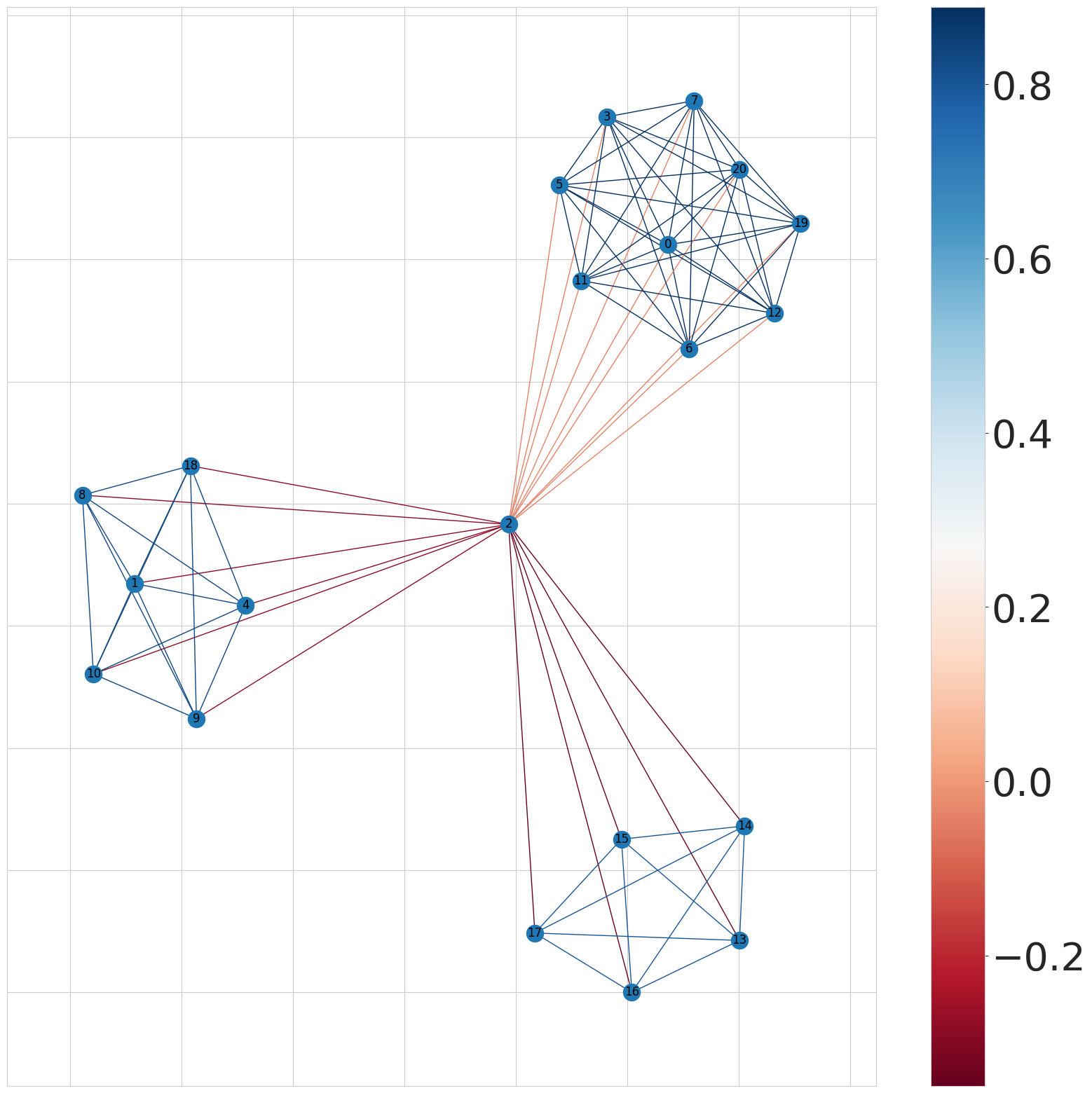}}
    \hfill
    {\includegraphics[width=.24\linewidth]{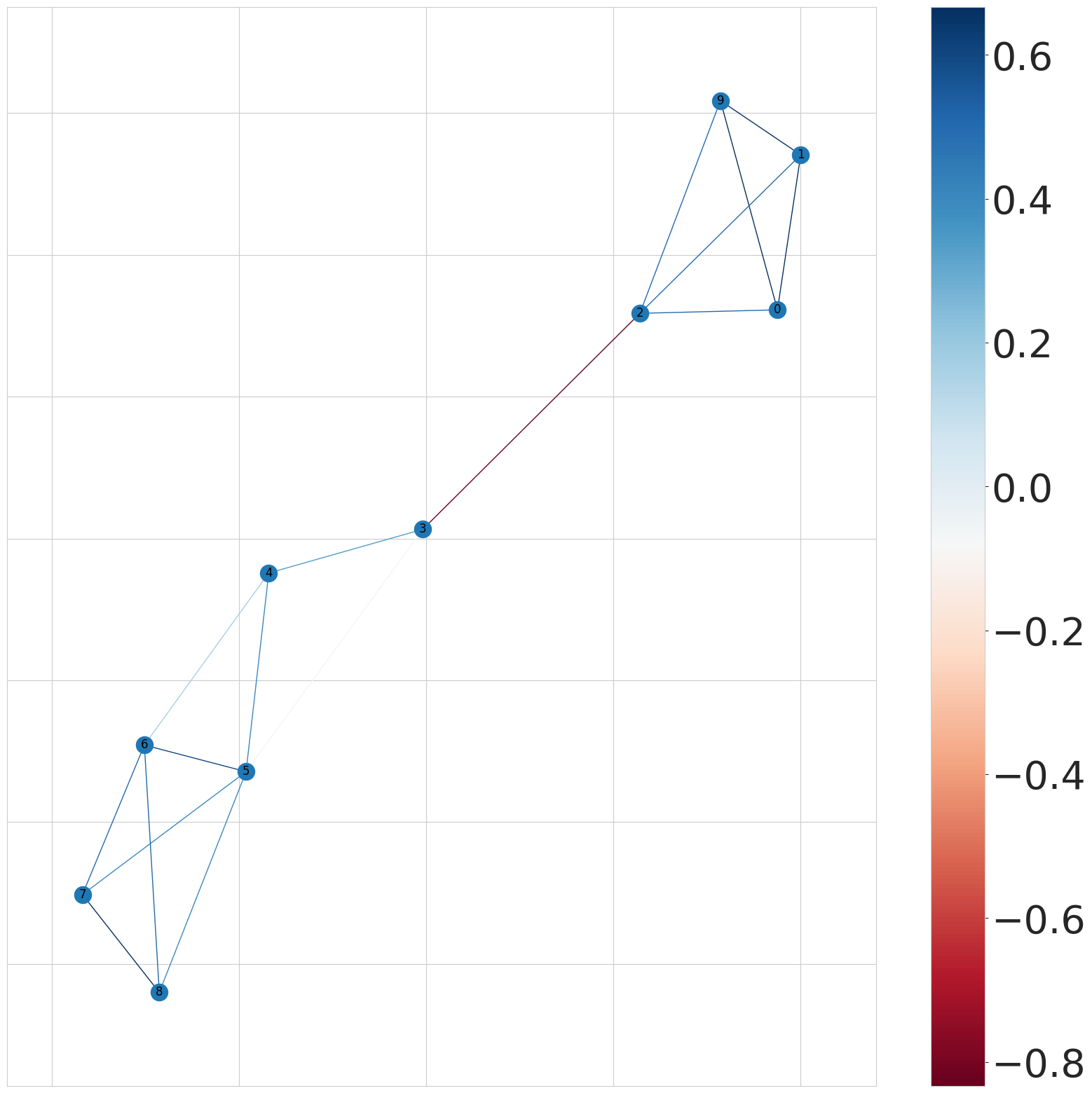}}
    \medskip
    {\includegraphics[width=.24\linewidth]{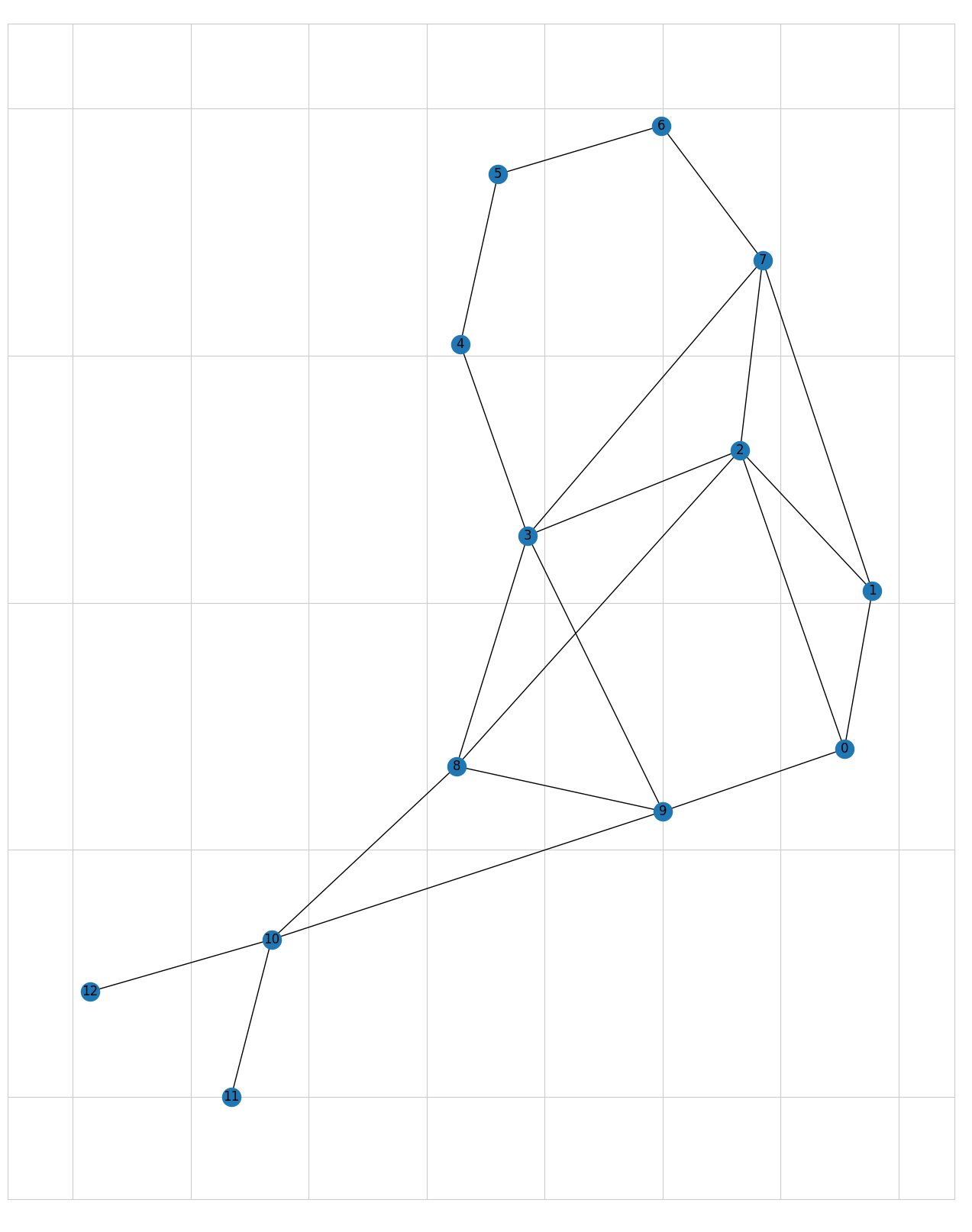}}
    \hfill
    {\includegraphics[width=.24\linewidth]{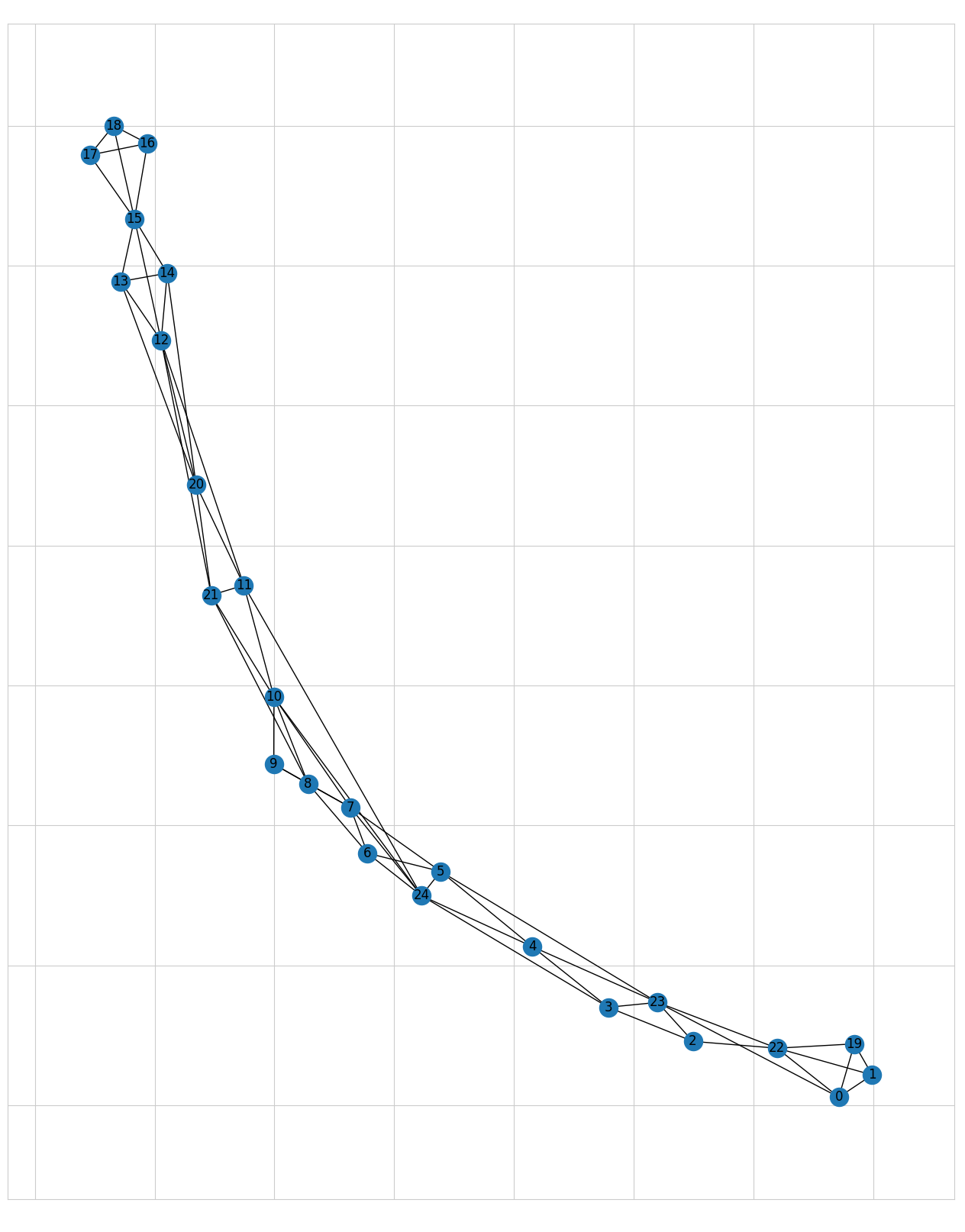}}
    \hfill
    {\includegraphics[width=.24\linewidth]{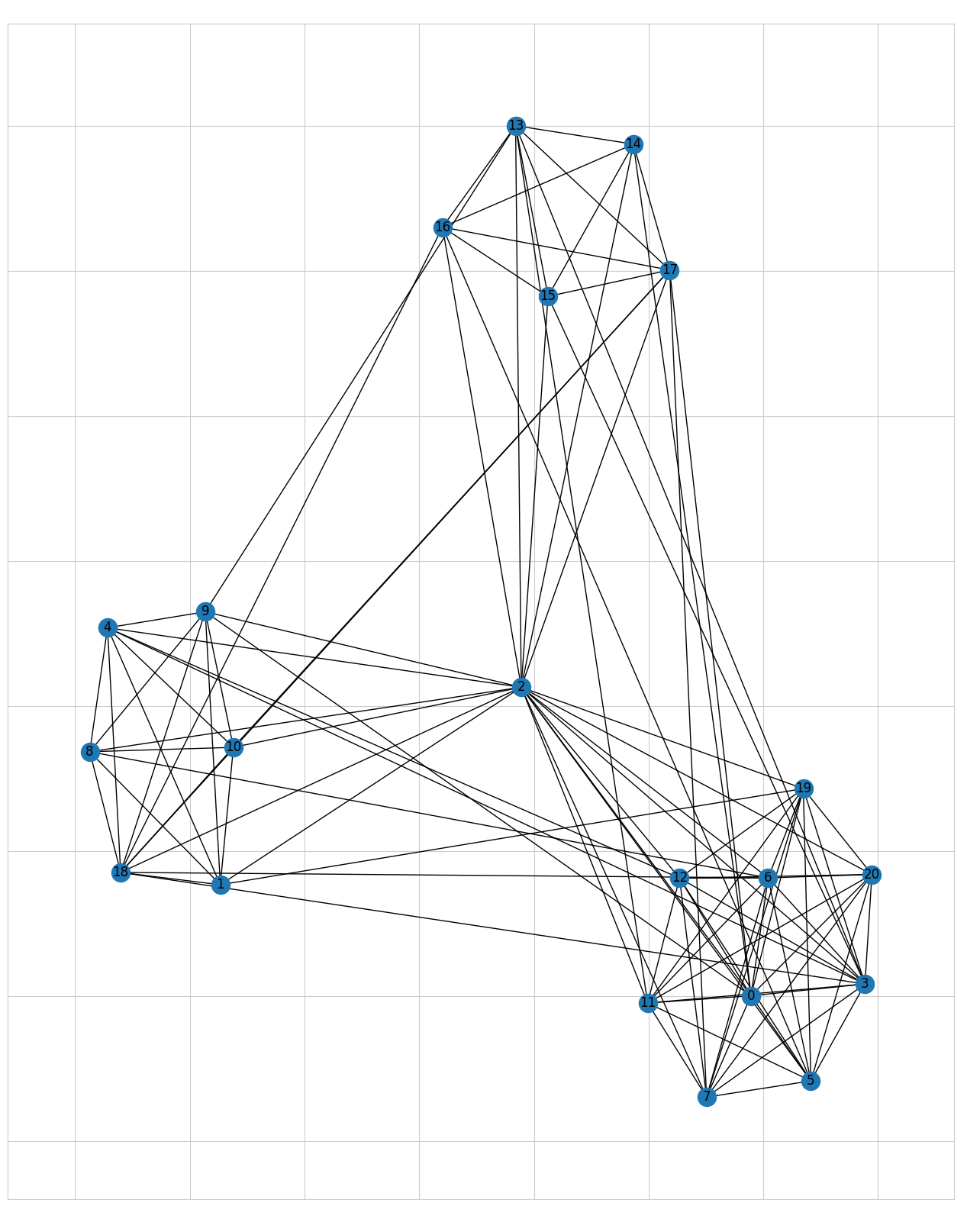}}
    \hfill
    {\includegraphics[width=.24\linewidth]{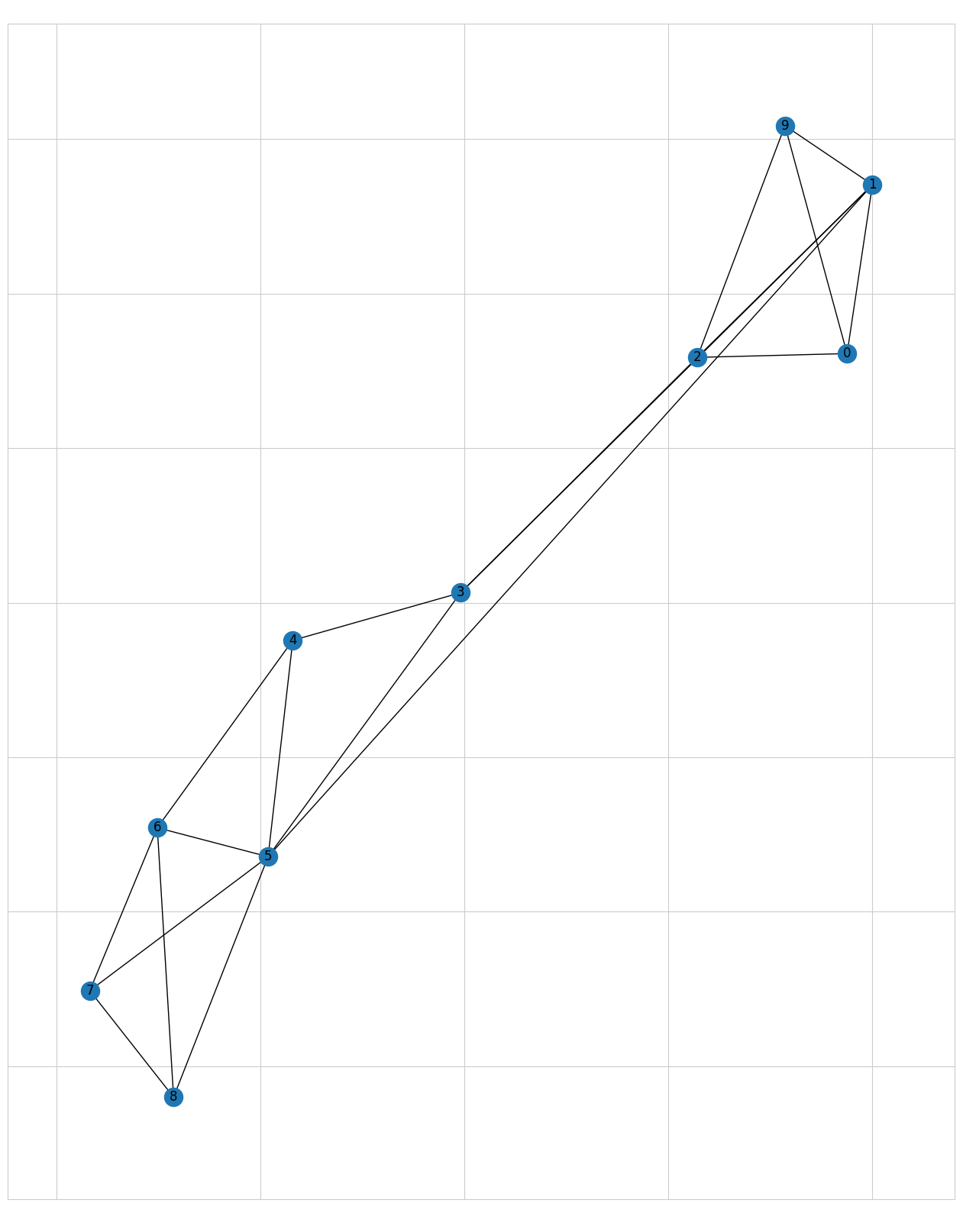}}
    \caption{Example networks from the mutag, enzymes, imdb, and proteins datasets, which we use for graph classification (top row). The middle row shows the same example networks with their edges colored according to their ORC values. We also depict the adjusted graphs after rewiring using BORF (bottom row).}\label{fig:afrc_examples}
\end{figure*}

\newpage

\subsection{Statistics for datasets in the main text}
\label{sub:summary_stats}

\subsubsection{General statistics for node classification datasets}

\begin{table}[h!]\label{tab:afrc_orc_heuristic}
\centering
\begin{tabular}{|l|c|c|}
\hline
 & CORA & CITESEER \\
\hline
\#NODES & 2485 & 2120 \\
\#EDGES & 5069 & 3679 \\
\# FEATURES & 1433 & 3703 \\
\#CLASSES & 7 & 6 \\
DIRECTED & FALSE & FALSE \\
\hline
\end{tabular}
\caption{Statistics of node classification datasets.}
\label{table:5}
\end{table}

\subsubsection{Curvature distributions for node classification datasets}

\begin{table}[h!]\label{tab:afrc_orc_heuristic}
\centering
\begin{tabular}{|l|c|c|c|c|}
\hline
DATASET & MIN. & MAX. & MEAN & STD \\
\hline
CORA & $-0.898$ & $1.0$ & $0.139$ & $0.346$ \\
CITESEER & $-0.861$ & $1.0$ & $0.029$ & $0.402$ \\
\hline
\end{tabular}

\caption{Curvature (ORC) statistics of node classification datasets.}
\label{table:6}
\end{table}

\subsubsection{General statistics for graph classification datasets}

\begin{table}[h!]\label{tab:afrc_orc_heuristic}
\centering
\begin{tabular}{|l|c|c|c|c|}
\hline
 & ENZYMES & IMDB & MUTAG & PROTEINS \\
\hline
\#GRAPHS & 600 & 1000 & 188 & 1113\\
\#NODES & 2-126 & 12-136 & 10-28 & 4-620\\
\#EDGES & 2-298 & 52-2498 & 20-66 & 10-2098\\
AVG \#NODES & 32.63 & 19.77 & 17.93 & 39.06\\
AVG \#EDGES & 124.27 & 193.062 & 39.58 & 145.63\\
\#CLASSES & 6 & 2 & 2 & 2\\
DIRECTED & FALSE & FALSE & FALSE & FALSE\\
\hline
\end{tabular}

\caption{Statistics of graph classification datasets.}
\label{table:5}
\end{table}

\subsubsection{Curvature distributions for graph classification datasets}

\begin{table}[h!]\label{tab:afrc_orc_heuristic}
\centering
\begin{tabular}{|l|c|c|c|c|}
\hline 
DATASET & MIN. & MAX. & MEAN & STD \\
\hline
ENZYMES & $-0.382$ & $0.614$ & $0.157$ & $0.230$\\
IMDB & $0.007$ & $0.606$ & $0.394$ & $0.223$ \\
MUTAG & $-0.334$ & $0.344$ & $-0.067$ & $0.218$ \\
PROTEINS & $-0.335$ & $0.624$ & $0.185$ & $0.228$ \\
\hline
\end{tabular}

\caption{Curvature (ORC) statistics of graph classification datasets.}
\label{table:6}
\end{table}

\noindent \textbf{Datasets.} We conduct our node classification experiments on the publicly available CORA and CITESEER \cite{DBLP:journals/corr/YangCS16} datasets, and our graph classification experiments on the ENZYMES, IMDB, MUTAG and PROTEINS datasets from the TUDataset collection \cite{DBLP:journals/corr/abs-2007-08663}.

\subsection{Hardware specifications and libraries}

\noindent We implemented all experiments in this paper in Python using PyTorch, Numpy PyTorch Geometric, and Python Optimal Transport. We created the figures in the main text using inkscape.

\noindent Our experiments were conducted on a local server with the specifications presented in the following table.

\begin{table}[h!]\label{tab:afrc_orc_heuristic}
\centering
\begin{tabular}{|l|l|}
\hline
COMPONENTS & SPECIFICATIONS \\
\hline
ARCHITECTURE & X86\_64 \\
OS & UBUNTU 20.04.5 LTS x86\_64 \\
CPU & AMD EPYC 7742 64-CORE \\
GPU & NVIDIA A100 TENSOR CORE \\
RAM & 40GB\\
\hline
\end{tabular}
\caption{}
\end{table}

\end{document}